\newcommand{\Pp}{\mathbb{P}}
\newcommand{\E}{\mathbb{E}}
\newtheorem{theorem}{Theorem}
\newtheorem{lemma}{Lemma}
\newtheorem{proposition}{Proposition}
\newtheorem{assumption}{Assumption}
\theoremstyle{remark} 
\newtheorem{remark}{Remark} 
\date{}
\title{
Bounded Regret for Finitely Parameterized Multi-Armed Bandits
}
\author{Kishan Panaganti and Dileep Kalathil
	\thanks{Authors are with the Department of Electrical and Computer Engineering at Texas A\&M University, College Station, TX, USA. Email:{\tt \{kpb, dileep.kalathil\}@tamu.edu }}
	\thanks{This work was supported in part by the National Science Foundation  Grant CRII:CPS-1850206}
}
\begin{document}

\maketitle
\thispagestyle{empty}
\pagestyle{empty}

\begin{abstract}
	We consider the problem of finitely parameterized multi-armed bandits where the model of the underlying stochastic environment can be characterized based on a common unknown parameter. The true parameter is unknown to the learning agent. However, the set of possible parameters, which is finite,  is known a priori. We propose an algorithm that is simple and easy to implement, which we call  Finitely Parameterized Upper Confidence Bound (FP-UCB) algorithm, which uses the information about the underlying parameter set for faster learning. In particular, we show that the FP-UCB algorithm achieves a bounded regret under some structural condition on the underlying parameter set.  We also show that, if the underlying parameter set does not satisfy the necessary structural condition, the FP-UCB algorithm achieves a logarithmic regret, but with a smaller preceding constant compared to the standard UCB algorithm. We also validate the superior performance of the FP-UCB algorithm through extensive numerical simulations. 
\end{abstract}



\section{Introduction}

Multi-Armed Bandits (MAB) problems are canonical formalism for studying how an  agent learns to take optimal actions by repeated interactions with a stochastic  environment. The learning agent receives a reward at each time step which will depend on the action of the agent as well as the stochastic uncertainty associated with the environment. The goal of the  agent is to take actions in such a way  to maximize the cumulative reward. When the  model of the environment is perfectly known, computing the optimal action is  often a straightforward optimization problem. The challenge, as in the case of most real-world problems, is that agent does not know the stochastic model of environment a priori. The  agent needs to do  \textit{exploration}, i.e., take various actions sequentially to gather information,  in order to estimate the   model of the system. At the same time, the  agent needs to do \textit{exploitation} of the available information at any given time for maximizing the cumulative  reward. This  \textit{exploration vs. exploitation} trade-off is at the core of the MAB problems.   

 
 Lai and Robbins in their seminal paper \cite{lai1985asymptotically}  formulated the non-Bayesian stochastic MAB problem and characterized the performance of a learning algorithm using the metric of \textit{regret}. They showed that no learning algorithm will be able to achieve a regret better than $O(\log T)$. They also proposed a learning algorithm that achieves an asymptotic logarithmic regret, matching the fundamental lower bound. A simple index-based algorithm called UCB algorithm was introduced in \cite{auer2002finite} which achieves the order optimal regret  in a non-asymptotic manner. This approach led to the development of a number of interesting algorithms, like   linear bandits \cite{dani2008stochastic}, contextual bandits \cite{chu2011contextual}, combinatorial bandits \cite{cesa2012combinatorial}, and decentralized and multi-player bandits \cite{kalathil2014decentralized}. 
 
 
Thompson (Posterior) Sampling  is another class of algorithms that gives superior numerical performance for MAB problems. Posterior sampling heuristic was first introduced by Thompson \cite{thompson1933likelihood}, but the first rigorous performance guarantee, an $O(\log T)$ regret, was given in \cite{agrawal2012analysis}. Thompson sampling idea has been used to develop algorithms for  bandits with multiple plays \cite{komiyama2015optimal}, contextual bandits \cite{agrawal2013thompson}, general online learning problem \cite{gopalan2014thompson}, and reinforcement learning \cite{osband2013more}. Both classes of algorithms have been used in a number of practical applications, like communication networks \cite{tekin2012approximately}, smart grids \cite{kalathil2015online}, and recommendation systems \cite{zong2016cascading}.

{\bf Our contribution:} We consider a class of multi-armed bandits problems where the reward corresponding to each arm can be characterized based on a common  unknown parameter.  In particular, we consider the setting where the cardinality of the set of possible parameters is finite.  This is inspired  by many real-world applications. For example, in recommendation systems and e-commerce applications (Amazon, Netflix), it is typical to assume that each user has a certain `type' parameter (denoted as $\theta$ in our problem formulation), and the set of possible parameters is finite. The preferences of the user is characterized by her type (for example, prefer science books over fiction books). The set of all possible types and the preferences of each type may be known a priori, but the  type of a new user may be unknown. So, instead of learning the preferences of this user over all possible choices, it may be easier to learn the type parameter of this user from a few observations. In this work, we propose an algorithm that explicitly uses the availability of such structural information about the underlying parameter set which enables a faster learning.

 We propose an algorithm that is simple and easy to implement, which we call FP-UCB algorithm, that uses the structural information  for faster learning. We show that the proposed FP-UCB algorithm can achieve a bounded regret $(O(1))$ under some structural condition on the underlying parameter set. This is in sharp contrast to the increasing $(O(\log T))$ regret of the standard multi-armed bandits algorithms. We also show that, if the underlying parameter set does not satisfy the necessary structural condition, the FP-UCB algorithm achieves a regret of $O(\log T)$, but with a smaller preceding constant compared to the standard UCB algorithm. The regret achieved by our algorithm also matches with the fundamental lower bound given by \cite{rajeev1989asymptotically}.  One remarkable aspect of our algorithm is that, it is oblivious to the fact  if the underlying parameter set satisfies the necessary condition or not, and thus avoiding re-tuning of the algorithm depending on the problem instance. Instead, it achieves the best possible performance given the problem instance. 

{\bf Related work:} Finitely parameterized multi-armed bandits problem was first studied by Agrawal et al. \cite{rajeev1989asymptotically}. They also proposed an algorithm for this setting, and proved that their algorithm achieves a bounded regret when the parameter set satisfies some necessary condition, and logarithmic regret otherwise. However, their algorithm is rather complicated which limits practical implementations and extension to other settings. The regret analysis is also involved and asymptotic in nature,  different from the  recent simpler index-based bandits algorithms and  their finite time analysis. \cite{rajeev1989asymptotically} also provided a fundamental lower bound for this class of problems. Compared to this work, our  FP-UCB algorithm is  simple, easy to implement, and easy to analyze, while providing non-asymptotic performance guarantees, which matches the lower bound. 

There are some recent works on exploiting the available structure of the MAB problem for getting  tighter regret bounds. In particular,   \cite{atan2015global} \cite{lattimore2014bounded}   \cite{maillard2014latent} \cite{combes2017minimal}  consider the problem setting similar to our paper where the mean reward of each arm is parameterized by a single unknown  parameter.  \cite{atan2015global} assumes that the reward functions are continuous in the global parameter and gives a bounded regret result.  \cite{lattimore2014bounded} gives specific conditions on the mean reward to achieve a bounded regret. \cite{maillard2014latent} considers a latent bandit problem where the reward distributions are  partitioned into a  number of clusters and indexed by a latent  parameter corresponding to the cluster.  \cite{combes2017minimal}  characterizes the minimal rates at which sub-optimal arms have to be explored depending on the structural information, and proposes an algorithm which achieves these rates.   \cite{bubeck2013bounded}   \cite{bubeck2013prior}   \cite{vakili2013achieving} exploit a different structural information where   it is shown that if  the mean value of the best arm and the second best arm (but not the identity of the arms) are known, then a bounded regret can be achieved. There are also  works on bandits algorithms that try to exploit the side information \cite{wang2005bandit} \cite{caron2012leveraging}, and recently in the context of  contextual bandits \cite{bastani2017mostly}.  Our problem formulation, algorithm, and analysis are very different from these works.  We also note that our problem formulation is fundamentally different from the system identification problems \cite{ljung1998system} \cite{kumar2015stochastic} because the goal  here is to learn an optimal policy online.

\section{Problem Formulation}

We consider the following sequential decision making problem. In each time step  $t \in \{1, 2, \ldots, T\},$ the  agent   selects an arm (action) from the set of $L$ possible arms, denoted as, $a(t) \in [L] = \{1, \ldots, L\}$.  Each arm $i$, when selected, yields a random real-valued reward. More precisely, let $X_{i}(\tau)$  be the random reward from arm $i$  in its $\tau$th selection. We assume that  $X_{i}(\tau)$  is drawn according to a probability distribution $P_{i}(\cdot ; \theta^{o})$ with a mean $\mu_{i}(\theta^{o})$. Here $\theta^{o}$ is the (true) parameter that determines the distribution of the stochastic rewards. The agent does not know $\theta^{o}$ or the corresponding mean values $\mu_{i}(\theta^{o})$. The random reward obtained from playing an arm repeatedly are i.i.d. and independent of the plays of the other arms.  We assume that  rewards are bounded with support in $[0,1]$.  The goal of the agent is to select a sequence of  actions  that maximizes the expected cumulative reward, $\mathbb{E}[\sum^{T}_{t=1} \mu_{a(t)}(\theta^{o}))]$.  The action $a(t)$  depends on the history of observations available to the agent until time $t$. So, $a(t)$ is stochastic and the expectation is with respect to all the  possible randomness. 

Clearly, the optimal choice is to select the  best arm (the arm with the highest mean value)  all the time, i.e., $a(t) = a^{*}(\theta^{o}), \forall t$, where $a^{*}(\theta^{o}) = \arg \max_{i \in [L]} \mu_{i}(\theta^{o})$.  However, the agent will be able to make this optimal decision only if  she knows the parameter $\theta^{o}$ or the corresponding mean values  $\mu_{i}(\theta^{o})$ for all $i$. The goal of a MAB  algorithm is to learn to make the optimal sequence of  decisions  without knowing the true  parameter $\theta^{o}$.

We consider the setting where the agent knows the set of possible  parameters $\Theta$. We assume that $\Theta$ is finite.  If the true parameter were $\theta \in \Theta$, then  agent selecting arm $i$ will get a random reward  drawn according to a distribution $P_{i}(\cdot ; \theta)$ with a mean $\mu_{i}(\theta)$.  We assume that for each $\theta \in \Theta$, the agent knows $P_{i}(\cdot ; \theta)$ and $\mu_{i}(\theta)$ for all $i \in [L]$. The optimal arm corresponding to the parameter $\theta$ is denoted as  $a^{*}(\theta) = \arg \max_{i \in [L]} \mu_{i}(\theta)$. We emphasize that the agent does not know the true parameter $\theta^{o}$ (and hence the optimal action $a^{*}(\theta^{o})$) except the fact that  it  is in the finite set $\Theta$.

In the multi-armed bandits literature, it is standard to characterize the performance of an online learning algorithm using the metric of regret. Regret is defined as the performance loss of an algorithm as compared to the optimal algorithm with complete information. Since $b(t) = a^{*}(\theta^{o})$, the expected cumulative regret of a multi-armed bandits algorithm after $T$ time steps is defined as
\begin{align}
\label{eq:regret-defn1}
\mathbb{E}[R(T)] := \mathbb{E}\left[\sum^{T}_{t=1} (\mu_{a^{*}(\theta^{o})}(\theta^{o}) -\mu_{a(t)}(\theta^{o}))\right].
\end{align}

The goal of a multi-armed bandits learning algorithm is to select actions sequentially in order to minimize $\mathbb{E}[R(T)]$.

\section{UCB Algorithm for Finitely Parameterized Multi-Armed Bandits}

In this section, we present our algorithm for finitely parameterized multi-armed bandits and the main theorem. We first introduce  a few notations for presenting the algorithm and the results succinctly.

Let $n_{i}(t)$ be the number of times arm $i$ has been selected by the algorithm until time $t$, i.e., $n_{i}(t) = \sum^{t}_{\tau = 1} \mathbbm{1}\{a(\tau) = i\}$. Here $\mathbbm{1}\{.\}$ is an indicator function. Define the empirical mean corresponding to arm $i$ at time $t$ as,
\begin{align}
\label{eq:muhat}
\hat{\mu}_{i}(t) := \frac{1}{n_i(t)} \sum^{n_i(t)}_{\tau = 1} X_{i}(\tau).
\end{align}

Define the set $A:=\{ a^{*}(\theta) : \theta\in\Theta \}$, which is the collection of  optimal arms corresponding to  all  parameters in $\Theta$.  Intuitively, a learning agent can restrict to selecting the arms from the set $A$. Clearly, $A \subset [L]$ and this reduction can be  useful when $|A|$ is much smaller than $L$.

Our FP-UCB Algorithm is given in Algorithm \ref{alg:fp-ucb}.  Figure \ref{fig:episodes} gives an illustration of the episodes and time slots of the FP-UCB algorithm. 

For stating the main result, we introduce a few more notations. We define the \textit{confusion set} $B(\theta^{o})$ and  $C(\theta^{o})$ as,
\begin{align*}
B(\theta^{o}) &:= \{\theta \in \Theta : a^*(\theta) \neq a^*(\theta^{o})~ \text{and} ~ \mu_{a^*(\theta^{o})}(\theta^{o}) = \mu_{a^*(\theta^{o})}(\theta) \},  \\
C(\theta^{o}) &:= \{a^{*}(\theta) : \theta \in B(\theta^{o})\}. 
\end{align*}

Intuitively, $B(\theta^{o})$ is the set of parameters that can be confused with the true parameter $\theta^{o}$. If $B(\theta^{o})$  is non-empty,  selecting $a^*(\theta^{o})$   and estimating the empirical mean is not sufficient to identify the true parameter because the same mean reward can result from other parameters in $B(\theta^{o})$. So, if $B(\theta^{o})$  is non-empty, more exploration (i.e., selecting sub-optimal arms other than $a^*(\theta^{o})$) is necessary to identify the true parameter. This exploration will contribute to the regret. On the other hand, if $B(\theta^{o})$  is empty, optimal parameter can be identified with much less exploration, which results in a bounded regret. $C(\theta^{o})$ is the corresponding  set of  arms that needs to be explored sufficiently for identifying the optimal parameter. So, whether $B(\theta^{o})$ is empty or non-empty is the structural condition that decides the performance of the algorithm.

We make the following assumption.

\begin{assumption}[Unique best action] \label{assumption:unique}
	For all $\theta\in\Theta$, the optimal action, $a^*(\theta)$, is unique.
\end{assumption}
We note that this is a standard assumption in the literature. This assumption can be removed at the expense of more notations. We define  $\Delta_{i}$ as,
\begin{align}
\Delta_{i} &:= \mu_{a^{*}(\theta^{o})}(\theta^{o}) - \mu_{i}(\theta^{o}), \label{eq:delta_i}
\end{align}
which is the difference between the mean value of the optimal arm and the mean value of arm $i$ for the true parameter $\theta^{o}$. This is the standard optimality gap notion used  in the  MAB literature \cite{auer2002finite}. Without loss of generality assume natural logarithms.

For each arm in $i \in C(\theta^{o})$, we define, 
\begin{align}
\beta_{i} &:= \min_{\theta:\theta\in B(\theta^o),a^*(\theta)=i} |\mu_{i}(\theta^{o}) - \mu_{i}(\theta) |. \label{eq:beta_i}
\end{align}

We use the following Lemma to compare our result with classical MAB result. The proof for this lemma is given in the appendix.

\begin{lemma}\label{lemma_beta_delta}
Let $\Delta_{i}$ and $\beta_{i}$ be as defined in \eqref{eq:delta_i} and \eqref{eq:beta_i} respectively. Then, for each $i \in C(\theta^{o})$, $\beta_{i} > 0$. Moreover,  $\beta_{i} > \Delta_{i}$.
\end{lemma}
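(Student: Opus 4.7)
The plan is to unpack the definitions of $C(\theta^o)$, $B(\theta^o)$, $\beta_i$ and $\Delta_i$ and chain together three elementary inequalities, two of which come from the optimality/uniqueness of the best action and one from the defining equation of the confusion set.

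First I would fix an arbitrary $i \in C(\theta^o)$ and, by definition of $C(\theta^o)$, pick some $\theta \in B(\theta^o)$ with $a^*(\theta) = i$. Note that $\theta \in B(\theta^o)$ already forces $i = a^*(\theta) \neq a^*(\theta^o)$, so by Assumption \ref{assumption:unique} (uniqueness of the optimal action at $\theta^o$) we get the strict inequality $\Delta_i = \mu_{a^*(\theta^o)}(\theta^o) - \mu_i(\theta^o) > 0$. This will be the baseline to beat.

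Next I would use the uniqueness of the optimal action at $\theta$: since $i = a^*(\theta)$ is the unique maximizer of $\mu_\cdot(\theta)$ and $a^*(\theta^o) \neq i$,
\begin{equation*}
\mu_i(\theta) \;>\; \mu_{a^*(\theta^o)}(\theta).
\end{equation*}
Now I would invoke the defining property of $B(\theta^o)$, namely $\mu_{a^*(\theta^o)}(\theta) = \mu_{a^*(\theta^o)}(\theta^o)$, to rewrite the right-hand side and obtain
\begin{equation*}
\mu_i(\theta) \;>\; \mu_{a^*(\theta^o)}(\theta^o).
\end{equation*}
Subtracting $\mu_i(\theta^o)$ from both sides gives $\mu_i(\theta) - \mu_i(\theta^o) > \Delta_i > 0$, so in particular $|\mu_i(\theta) - \mu_i(\theta^o)| > \Delta_i$.

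Finally, since the above strict lower bound by $\Delta_i$ holds for every $\theta$ in the (finite, nonempty) set over which $\beta_i$ is a minimum, taking the minimum preserves the strict inequality and yields $\beta_i > \Delta_i > 0$. Honestly there is no real obstacle here beyond the bookkeeping; the only subtle point is recognizing that it is Assumption \ref{assumption:unique} applied at $\theta$ (not at $\theta^o$) that upgrades the inequality $\mu_i(\theta) \ge \mu_{a^*(\theta^o)}(\theta)$ to a strict one, which is what lets us conclude strict dominance of $\beta_i$ over $\Delta_i$ rather than mere non-negativity.
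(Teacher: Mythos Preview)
Your proposal is correct and follows essentially the same route as the paper: both use the uniqueness assumption at $\theta$ to get $\mu_i(\theta) > \mu_{a^*(\theta^o)}(\theta)$, the confusion-set equality $\mu_{a^*(\theta^o)}(\theta) = \mu_{a^*(\theta^o)}(\theta^o)$, and then compare with $\mu_i(\theta^o)$. The only cosmetic difference is that the paper first establishes $\beta_i>0$ and then argues $\beta_i>\Delta_i$ by contradiction, whereas you obtain both conclusions in one direct chain $\mu_i(\theta)-\mu_i(\theta^o)>\Delta_i>0$; the underlying idea is identical.
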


\begin{algorithm}
	\caption{FP-UCB }	
	\label{alg:fp-ucb}
	\begin{algorithmic}[1]
	   \STATE Initialization: Select each arm in the set $A$ once
		\STATE Initialize episode number $k =1$, time step $t=|A|+1$
		\WHILE {$t \leq T$}
		\STATE $t_k=t-1$
		\STATE Compute the set 
		\[A_k = \left\{\begin{array}{ll} a^{*}(\theta), \theta \in \Theta : \forall i\in A,  ~ |\hat{\mu}_{i}(t_k) - \mu_{i}(\theta) | \leq \sqrt{\frac{3\log (k)}{n_{i}(t_k)}} \end{array} \right\}\]
		\IF {$|A_k| \neq 0$}
		\STATE Select each arm in the set $A_k$ once 
		\STATE $t \leftarrow t + |A_k|$
		\ELSE
		\STATE Select each arm in the set $A$ once 
		\STATE $t \leftarrow t + |A|$
		\ENDIF
		\STATE $k\leftarrow k+1$
		\ENDWHILE
	\end{algorithmic}
\end{algorithm}

 \begin{figure}[t]
	\centering
	\includegraphics[width=\linewidth]{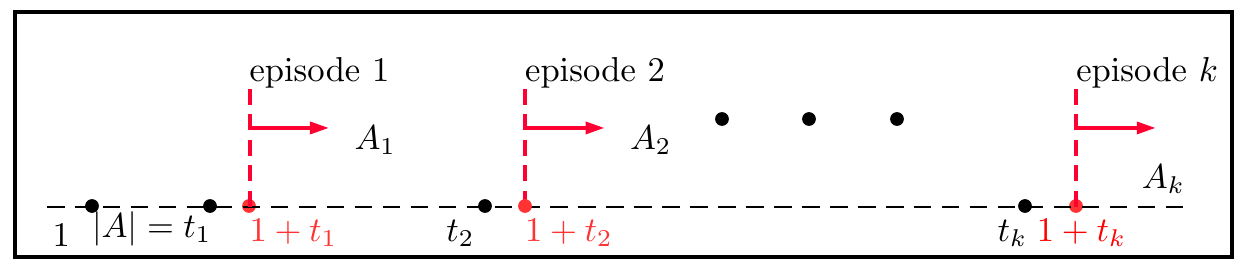}
	\captionof{figure}{An illustration of the episodes and time slots of the FP-UCB algorithm.}
	\label{fig:episodes}
\end{figure} We now present the finite time performance guarantee for our FP-UCB  algorithm.

\begin{theorem}
	\label{thm:fp-ucb}
	Under the FP-UCB algorithm,
	\begin{align}
	\label{eq:thm1_1}
	\mathbb{E}[R(T)] \leq D_1,~~\text{if}~ B(\theta^{o})~\text{empty, and}~ 
	\mathbb{E}[R(T)] \leq D_2 + 12 \log(T) \sum_{i\in C(\theta^{o})} \frac{\Delta_i}{\beta^2_{i}},~~ \text{if}~ B(\theta^{o})~\text{non-empty},
	\end{align}
	where $D_1$ and $D_{2}$ are problem dependent constants that depend only on the problem parameters $|A|$  and $(\mu_{i}(\theta), \theta \in \Theta)$, but do not depend  on  $T$.
\end{theorem}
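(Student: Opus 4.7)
The plan is a parameter-elimination style adaptation of the standard UCB regret analysis, organized around the two cases determined by whether $B(\theta^o)$ is empty. I would begin with the usual decomposition $\mathbb{E}[R(T)] = \sum_{i \neq a^*(\theta^o)} \Delta_i\, \mathbb{E}[n_i(T)]$. Arms $i \notin A$ are only played during the initial sweep through $A$, contributing a constant absorbed into $D_1, D_2$, so it suffices to bound $\mathbb{E}[n_i(T)]$ for $i \in A \setminus \{a^*(\theta^o)\}$. Writing $n_i(T) \le 1 + \sum_{k=1}^{K(T)} \mathbbm{1}\{i \in A_k\} + \sum_{k=1}^{K(T)} \mathbbm{1}\{|A_k|=0\}$ reduces the problem to bounding the expected number of episodes in which a suboptimal arm survives the consistency test.

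Define the concentration event
\begin{equation*}
\mathcal{E}_k = \Bigl\{\,|\hat\mu_j(t_k) - \mu_j(\theta^o)| \le \sqrt{3\log k / n_j(t_k)}\ \text{for all } j \in A\,\Bigr\}.
\end{equation*}
Hoeffding's inequality combined with a union bound over $j \in A$ and over the possible values of $n_j(t_k) \in \{1,\ldots,t_k\}$ gives $\Pr(\mathcal{E}_k^c) \le 2|A|\,t_k / k^6$, which is summable in $k$ since $t_k = O(|A|\,k)$; its contribution goes into $D_1, D_2$. On $\mathcal{E}_k$ the true parameter $\theta^o$ satisfies the consistency inequality, so $a^*(\theta^o) \in A_k$, hence $|A_k|\ge 1$, and consequently $n_{a^*(\theta^o)}(t_k)$ grows linearly in the number of good episodes up to $t_k$. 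Moreover, if $i \in A_k$ with $i \neq a^*(\theta^o)$, some $\theta$ with $a^*(\theta)=i$ passes the test, and the triangle inequality yields
\begin{equation*}
|\mu_j(\theta^o) - \mu_j(\theta)| \le 2\sqrt{3\log k / n_j(t_k)} \qquad \text{for every } j \in A.
\end{equation*}

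In the case $B(\theta^o) = \emptyset$, every $\theta$ with $a^*(\theta) \neq a^*(\theta^o)$ has a strictly positive gap $\delta_\theta := |\mu_{a^*(\theta^o)}(\theta^o) - \mu_{a^*(\theta^o)}(\theta)|$. Specializing the displayed bound to $j = a^*(\theta^o)$ and using $n_{a^*(\theta^o)}(t_k) \gtrsim k$ on good events, the inequality fails simultaneously for all $\theta \neq \theta^o$ once $k$ exceeds a finite threshold determined by $\min_\theta \delta_\theta$. Thus each $i \neq a^*(\theta^o)$ appears in $A_k$ only finitely often on good events, giving $\mathbb{E}[n_i(T)] = O(1)$ and the constant bound $D_1$. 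In the case $B(\theta^o) \neq \emptyset$ and $i \in C(\theta^o)$, I specialize the displayed bound to $j = i$: survival of any $\theta \in B(\theta^o)$ with $a^*(\theta) = i$ forces $\beta_i \le |\mu_i(\theta^o) - \mu_i(\theta)| \le 2\sqrt{3\log k / n_i(t_k)}$ by definition of $\beta_i$, equivalently $n_i(t_k) \le 12\log k / \beta_i^2$. Since $n_i$ increments by one each time $i$ appears in $A_k$, the total plays of $i$ are at most $12 \log T / \beta_i^2 + O(1)$. For the remaining arms $i \in A \setminus (C(\theta^o) \cup \{a^*(\theta^o)\})$, every $\theta$ with $a^*(\theta) = i$ lies outside $B(\theta^o)$, so a strictly positive gap on arm $a^*(\theta^o)$ exists and the Case 1 argument (applied locally) gives a $O(1)$ contribution to $D_2$. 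Summing $\Delta_i\, \mathbb{E}[n_i(T)]$ over $i \neq a^*(\theta^o)$ delivers the claimed bound.

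The main technical obstacle is the translation between the episode index $k$, which drives the confidence radius, and the sample counts $n_j(t_k)$ that appear in the concentration bounds. The radius $\sqrt{3\log k / n_j(t_k)}$, rather than the usual $\sqrt{\log t / n_j}$, forces one to verify that, on good events, each arm that is \emph{responsible} for sustaining a confuser $\theta$ is itself being sampled fast enough for the $\log k$ factor to close the gap to $\beta_i$ (respectively $\delta_\theta$). Carefully aggregating the bad-event probabilities, absorbing the rescue step $|A_k|=0$ and the arms in $A \setminus (C(\theta^o)\cup\{a^*(\theta^o)\})$ into the constants $D_1, D_2$, and handling the random sample-count union bound cleanly, is the delicate bookkeeping part of the proof.
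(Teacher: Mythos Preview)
Your proposal is essentially the same strategy as the paper's: decompose the regret over arms in $A$, introduce a Hoeffding-based concentration event so that $\theta^o$ always passes the consistency test, use the triangle inequality to force a lower bound on the relevant sample count whenever a confuser $\theta$ survives, and then split arms into those handled via the gap on the optimal arm (your Case~1, the paper's Proposition~1) versus those handled via $\beta_i$ on arm $i$ itself (your Case~2, the paper's Proposition~2).

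The one place where your sketch and the paper's execution diverge is the justification that $n_{a^*(\theta^o)}(t_k)$ is large enough to eliminate $\theta\notin B(\theta^o)$. You write ``$n_{a^*(\theta^o)}(t_k)\gtrsim k$ on good events,'' but the good event $\mathcal E_k$ at episode $k$ says nothing about episodes $1,\dots,k-1$, so this linear-growth claim is not immediate; a naive Markov bound on the cumulative number of bad episodes gives only $O(1/k)$ tails, which is not summable. The paper sidesteps this with a \emph{window argument}: if $n_1(t_k)<12\log k/\alpha_1^2(\theta)$, then $1\notin A_\tau$ for some $\tau$ in the most recent $\lceil 12\log k/\alpha_1^2(\theta)\rceil$ episodes, and the probability of that (a union over a window of length $O(\log k)$ starting near $k$) is $O(k\cdot (k-O(\log k))^{-5})$, which is summable. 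You correctly identify this $k\leftrightarrow n_j(t_k)$ translation as the main technical obstacle, but be aware that the paper's fix is this localized window trick rather than a global linear-growth statement.
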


\begin{remark}[Comparison with the classical MAB results]
	Both UCB type algorithms and Thompson Sampling type algorithms give a problem dependent regret bound $O(\log T)$. More precisely, assuming that the optimal arm is arm 1, the regret of the UCB algorithm, $\mathbb{E}[R_{\textnormal{UCB}}(T)]$, is given by \cite{auer2002finite}
	\begin{align*}
	&\mathbb{E}[R_{\textnormal{UCB}}(T)] = O\left(\sum^{L}_{i=2} \frac{1}{\Delta_{i}} \log T\right).
	\end{align*}
	
	On the other hand, FP-UCB algorithm achieves the regret 
	\begin{align*}
	\mathbb{E}[R_{\textnormal{FP-UCB}}(T)] = O(1) ,~~\text{if}~ B(\theta^{o})~\text{empty, and} ~
	O\left(   \sum_{i\in C(\theta^{o})} \frac{\Delta_i}{\beta^2_{i}} \log T\right), \text{if}~ B(\theta^{o})~\text{non-empty.} 
	\end{align*}
	
	Clearly, for some MAB problems, FP-UCB algorithm achieves a bounded regret ($O(1)$) as opposed to the increasing regret  ($O(\log T)$) of the standard UCB algorithm. Even in the cases where FP-UCB algorithm incurs an increasing regret ($O(\log T)$), the preceding constant ($\Delta_{i}/\beta^{2}_{i}$) is smaller than  the preceding constant  ($1/\Delta_{i}$) of the standard UCB algorithm because $\beta_{i} > \Delta_{i}$.
\end{remark}

We now give the asymptotic  lower bound for the finitely parameterized multi-armed bandits problem from \cite{rajeev1989asymptotically}, for comparing the  performance of our FP-UCB algorithm.
\begin{theorem}[Lower bound \cite{rajeev1989asymptotically}]
\label{thm:LB}
For any uniformly good control scheme under the parameter $\theta^o$,
\begin{align*}
\liminf_{T\to\infty} \frac{\mathbb{E}[R(T)]}{\log(T)} \geq \min_{h \in H}  \max_{\theta \in B(\theta^{o})} \frac{ \sum_{u\in A\setminus \{a^{*}(\theta^{o})\}} h_u (\mu_{a^{*}(\theta^{o})}(\theta^{o}) - \mu_{u}(\theta^{o}))}{   \sum_{u\in A\setminus \{a^{*}(\theta^{o})\}} h_u     D_{u}( \theta^o \| \theta)}. \nonumber
\end{align*}
where $H$ is a probability simplex with $|A|-1$ vertices and, for any $u\in A\setminus \{a^{*}(\theta^{o})\}$, $D_{u}( \theta^o \| \theta) = \int P_{u}(x;\theta^o) \log ( P_{u}(x;\theta^o) / P_{u}(x;\theta)) dx$ is the  KL-divergence between the probability distributions $P_{u}(\cdot;\theta^o)$ and $P_{u}(\cdot;\theta)$.

	
\begin{remark}[Optimality of the FP-UCB algorithm]
From Theorem  \ref{thm:LB}, the achievable  regret of any multi-armed bandits learning algorithm is lower bounded by $\Omega(1)$  when $B(\theta^{o})$ is empty, and $\Omega(\log T)$ when $B(\theta^{o})$ is non-empty. Our FP-UCB algorithm achieves these bounds and hence achieves the order optimal performance.  
\end{remark}
	
\end{theorem}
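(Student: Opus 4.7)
The plan is to run a Lai-Robbins style change-of-measure argument adapted to the min-max structure of the bound. Fix any uniformly good scheme. For $u \in A \setminus \{a^*(\theta^o)\}$, set $T_u := \mathbb{E}_{\theta^o}[n_u(T)]$, $N := \sum_u T_u$, and $h_u := T_u/N$, so that $h \in H$. Dropping the non-negative contributions of arms outside $A$, the regret satisfies
$$\mathbb{E}[R(T)] \;\geq\; \sum_{u \in A\setminus\{a^*(\theta^o)\}} T_u \Delta_u \;=\; N \sum_{u} h_u \Delta_u,$$
so the task reduces to lower bounding $N$ as a function of the scheme-induced $h$.

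Next, for each confusing parameter $\theta \in B(\theta^o)$, I would apply the divergence decomposition
$$\mathrm{KL}\bigl(\mathbb{P}^T_{\theta^o},\,\mathbb{P}^T_\theta\bigr) \;=\; \sum_{i=1}^{L} \mathbb{E}_{\theta^o}[n_i(T)]\, D_i(\theta^o\|\theta).$$
For $\theta \in B(\theta^o)$ we have $D_{a^*(\theta^o)}(\theta^o\|\theta) = 0$, since $\mu_{a^*(\theta^o)}(\theta^o) = \mu_{a^*(\theta^o)}(\theta)$ and the parametric family is identified by the mean at that arm. Assuming (matching the form of $H$) that the scheme is supported on $A$, the RHS reduces to $N\sum_u h_u D_u(\theta^o\|\theta)$. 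Apply Kullback's inequality (data processing for KL) to $E_\theta := \{n_{a^*(\theta)}(T) < T/2\}$:
$$\mathrm{KL}\bigl(\mathbb{P}^T_{\theta^o},\,\mathbb{P}^T_\theta\bigr) \;\geq\; d_{\mathrm{bin}}\bigl(\mathbb{P}_{\theta^o}(E_\theta),\, \mathbb{P}_\theta(E_\theta)\bigr).$$
Uniform goodness applied under $\theta^o$ (where $a^*(\theta) \neq a^*(\theta^o)$ is suboptimal) gives $\mathbb{P}_{\theta^o}(E_\theta) \to 1$, while uniform goodness under $\theta$ (where $a^*(\theta)$ is optimal) combined with Markov gives $\mathbb{P}_\theta(E_\theta) = o(T^{a-1})$ for every $a > 0$; hence the binary KL grows like $(1-o(1))\log T$. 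Combining,
$$N \sum_{u \in A\setminus\{a^*(\theta^o)\}} h_u D_u(\theta^o\|\theta) \;\geq\; (1-o(1))\log T \quad \text{for every } \theta \in B(\theta^o).$$

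Dividing the two inequalities gives, for each such $\theta$,
$$\frac{\mathbb{E}[R(T)]}{\log T} \;\geq\; (1-o(1))\,\frac{\sum_u h_u \Delta_u}{\sum_u h_u D_u(\theta^o\|\theta)}.$$
Since this holds for each of the finitely many $\theta \in B(\theta^o)$, maximize over $\theta$; and because the scheme-induced $h$ is some element of the compact simplex $H$, any bound of this form is at least $\min_{h \in H}$ of its right-hand side. Taking $\liminf_{T\to\infty}$ (legitimate since $B(\theta^o)$ is finite and $H$ is compact) yields the stated inequality.

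The main technical obstacle is the Kullback-inequality step: extracting the clean $(1-o(1))\log T$ factor from uniform goodness \emph{uniformly} over the finitely many $\theta \in B(\theta^o)$, which requires the Markov estimate $\mathbb{P}_\theta(E_\theta) = o(T^{a-1})$ to be uniform in $\theta$. A secondary bookkeeping subtlety is arms outside $A$: uniform goodness only controls their expected play counts as $o(T^a)$, which does not automatically dominate $\log T$; this can be resolved either by restricting at the outset to schemes supported on $A$ (standard in this setting, and consistent with the simplex $H$ being indexed by $A\setminus\{a^*(\theta^o)\}$), or by noting that any play outside $A$ has a strictly positive gap $\Delta_u$ and therefore only strengthens the regret accounting.
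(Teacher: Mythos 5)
First, note that the paper itself does not prove Theorem \ref{thm:LB}: it is quoted from \cite{rajeev1989asymptotically} without proof, so there is no in-paper argument to compare against; your attempt has to be judged against the classical change-of-measure proof of Lai--Robbins and Agrawal et al. On that score, your skeleton is the right one and is essentially the same machinery as the source: divergence decomposition of the KL between path measures, data processing (Kullback's inequality) applied to the event $E_\theta=\{n_{a^*(\theta)}(T)<T/2\}$, and the two Markov estimates from uniform goodness giving $\mathbb{P}_{\theta^o}(E_\theta)\to 1$ and $\mathbb{P}_\theta(E_\theta)=o(T^{a-1})$. The uniformity concern you raise is a non-issue because $B(\theta^o)$ is finite, and your flagging of the mean-identification assumption (needed so that $\mu_{a^*(\theta^o)}(\theta^o)=\mu_{a^*(\theta^o)}(\theta)$ implies $D_{a^*(\theta^o)}(\theta^o\|\theta)=0$; true for Bernoulli, false in general) is correct and matches the fact that \cite{rajeev1989asymptotically} defines the confusion set through zero KL divergence rather than equal means.

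The genuine gap is the treatment of arms outside $A$, and your second fallback does not close it. The problem is not regret accounting but information accounting: the divergence decomposition sums $\mathbb{E}_{\theta^o}[n_i(T)]\,D_i(\theta^o\|\theta)$ over \emph{every} arm the scheme plays, so a scheme that explores with an arm $v\notin A$ acquires information about $\theta$ that your constraint $N\sum_u h_u D_u(\theta^o\|\theta)\geq(1-o(1))\log T$ simply does not see; that constraint can fail outright for such schemes. Concretely, if some $v\notin A$ has a tiny gap $\Delta_v$ under $\theta^o$ but a large divergence $D_v(\theta^o\|\theta)$ for every $\theta\in B(\theta^o)$, then a scheme that plays $a^*(\theta^o)$ except for roughly $\log T/D_v(\theta^o\|\theta)$ exploratory pulls of $v$ (switching if a likelihood-ratio test flags some $\theta\in B(\theta^o)$) is uniformly good and achieves a regret constant of about $\Delta_v/D_v$, which can be strictly smaller than the claimed $\min_{h\in H}\max_{\theta}$ ratio computed over $A\setminus\{a^*(\theta^o)\}$ alone. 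So what your argument actually delivers, for arbitrary uniformly good schemes, is the min-max bound with the simplex taken over all of $[L]\setminus\{a^*(\theta^o)\}$ --- which is what the optimization in \cite{rajeev1989asymptotically} ranges over --- a weaker constant than the one stated; your first fallback (restricting a priori to schemes supported on $A$) instead yields the stated constant but only for a strictly smaller class of schemes than ``any uniformly good control scheme.'' To be fair, this mismatch is an imprecision already present in the paper's restatement of the lower bound rather than something you introduced, but as written your proof does not establish the theorem in the generality claimed.
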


\section{Analysis of the FP-UCB Algorithm}\label{section:ucb-analysis}

In this section, we give the proof of Theorem \ref{thm:fp-ucb}. For reducing the notation, without loss of generality we assume that the true optimal arm is arm $1$, i.e., $a^*=a^*(\theta^{o})=1.$ We will also denote  $\mu_{j}(\theta^{o})$ as $\mu^{o}_{j}$, for any $j \in A$.

Now, we can rewrite the expected regret from \eqref{eq:regret-defn1} as 
\begin{align*}
&\E[R(T)] = \E\left[\sum^{T}_{t=1} (\mu^{o}_{1} -\mu^{o}_{a(t)} )\right] \\
&= \sum_{i=2}^L \Delta_i ~ \E\left[\sum_{t=1}^{T} \mathbbm{1}\{a(t) = i\} \right] = \sum_{i=2}^L \Delta_i ~ \E\left[n_i(T)\right].
\end{align*}Since the algorithm selects arms only from the set $A$, this can be written as \begin{align}
\label{eq:regret-defn2}
\E[R(T)] =  \sum_{i\in A} \Delta_i ~ \E\left[n_i(T)\right].
\end{align}

We first prove the following important propositions. 
\begin{proposition}
	\label{prop:1}
	For all $i \in A\setminus C(\theta^{o}), i \neq 1$, under FP-UCB algorithm,
	\begin{align}
	\E\left[ n_i(T)\right] \leq C_i,
	\end{align}
	where $C_i$ is a problem dependent constant that does not depend on $T$. 
\end{proposition}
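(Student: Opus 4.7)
The plan is to show that arm $i \in A \setminus C(\theta^{o})$ with $i \neq 1$ can enter the algorithm's candidate set $A_{k}$ only on a low-probability concentration event or when the count $n_{1}(t_{k})$ is atypically small, and that both contingencies arise only for an expected finite number of episodes. The structural input is that $i \notin C(\theta^{o})$: every $\theta \in \Theta$ with $a^{*}(\theta) = i$ must lie outside $B(\theta^{o})$, and since $a^{*}(\theta) = i \neq 1$, the defining condition of $B(\theta^{o})$ forces $\mu_{1}(\theta^{o}) \neq \mu_{1}(\theta)$. Finiteness of $\Theta$ then allows one to set $\gamma_{i} := \min_{\theta \in \Theta : a^{*}(\theta) = i} |\mu_{1}(\theta^{o}) - \mu_{1}(\theta)| > 0$.

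Step one is to introduce the clean event $F_{k} := \{\forall j \in A,\, |\hat{\mu}_{j}(t_{k}) - \mu_{j}(\theta^{o})| \leq \sqrt{3\log k / n_{j}(t_{k})}\}$ and, via Hoeffding's inequality with a union bound over arms in $A$ and over the possible values of $n_{j}(t_{k}) \leq t_{k} \leq k|A|$, establish that $P(F_{k}^{c}) = O(|A|^{2}/k^{5})$. Hence $\mathbb{E}[N_{\infty}] < \infty$ where $N_{\infty} := \sum_{k \geq 1} \mathbbm{1}\{F_{k}^{c}\}$. On $F_{k}$, the true parameter $\theta^{o}$ satisfies the consistency inequality in the algorithm's definition of $A_{k}$, so $1 = a^{*}(\theta^{o}) \in A_{k}$, which in particular rules out $A_{k} = \emptyset$ and forces arm $1$ to be played in episode $k$. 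This produces the sample-path lower bound $n_{1}(t_{k}) \geq (k-1) - N_{\infty}$ for every $k$.

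Step two is the confusion argument. For the arm $i$ in question, $i \in A_{k}$ requires some $\theta$ with $a^{*}(\theta) = i$ satisfying $|\hat{\mu}_{1}(t_{k}) - \mu_{1}(\theta)| \leq \sqrt{3\log k / n_{1}(t_{k})}$; combining with the $j = 1$ inequality from $F_{k}$ through the triangle inequality yields $\gamma_{i} \leq 2\sqrt{3\log k / n_{1}(t_{k})}$, i.e.\ $n_{1}(t_{k}) \leq 12 \log k / \gamma_{i}^{2}$. Because arm $i$ is played in episode $k$ only when $i \in A_{k}$ or $A_{k} = \emptyset$, and the latter is excluded on $F_{k}$, I would plug in the lower bound $n_{1}(t_{k}) \geq k - 1 - N_{\infty}$ to conclude that on $F_{k}$ arm $i$ can be played only for $k$ with $k - 12\log k / \gamma_{i}^{2} \leq 1 + N_{\infty}$. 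Since the left-hand side grows to $+\infty$, and in fact exceeds $k/2$ for $k$ larger than a constant $k_{0}$ depending only on $\gamma_{i}$, the number of such episodes is bounded by $\max(k_{0},\, 2 + 2N_{\infty})$. Taking expectations gives $\mathbb{E}[n_{i}(T)] \leq 1 + \mathbb{E}[N_{\infty}] + k_{0} + 2 + 2\mathbb{E}[N_{\infty}] =: C_{i}$, independent of $T$, which accounts for the initialization play, the bad episodes where $F_{k}^{c}$ holds, and the good episodes on which $i$ could still slip into $A_{k}$.

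The main obstacle will be the two-sided coupling in step two: the lower bound on $n_{1}(t_{k})$ comes from past clean episodes where arm $1$ was selected, while excluding $i$ from $A_{k}$ in the current episode depends on precisely this lower bound. Carefully decoupling them by sample-path dominating $n_{1}(t_{k})$ uniformly in $k$ by $(k-1) - N_{\infty}$, and then showing that the resulting random threshold has finite expectation, is the delicate part; everything else is a standard Hoeffding-plus-union-bound exercise.
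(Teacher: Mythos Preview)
Your proposal is correct and takes a genuinely different, cleaner route than the paper.

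The paper fixes one $\theta$ with $a^{*}(\theta)=i$, splits $\Pp(\{i\in A_k,\,1\in A_k\})$ according to the event $E_k=\{n_1(t_k)<12\log k/\alpha_1^2(\theta)\}$, shows the probability is zero on $E_k^c$ by the same triangle-inequality disjointness you use, and then handles $E_k$ by a ``look-back window'' argument: if $n'_1(t_k)<12\log k/\alpha_1^2(\theta)$ and $k$ is large enough, then necessarily $1\notin A_\tau$ for some $\tau$ in the window $[k-\lceil 12\log k/\alpha_1^2(\theta)\rceil,\,k-1]$, and one union-bounds over that window and reapplies Hoeffding. This produces the opaque constants $k_i(\theta)$ and $K_i(\theta)$, with a minimum over $\theta$ taken at the end.

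Your approach replaces the window trick by a single pathwise inequality: set $N_\infty=\sum_{k\geq 1}\mathbbm{1}\{F_k^c\}$, note that on $F_\tau$ arm~1 is always played so $n_1(t_k)\geq k- N_\infty$ deterministically, and conclude that on $F_k$ the event $i\in A_k$ forces $k-12\log k/\gamma_i^2\leq 1+N_\infty$. Since $\E[N_\infty]<\infty$, the count of such episodes is an integrable random variable and you are done. Two incidental advantages: taking $\gamma_i=\min_{\theta:a^*(\theta)=i}|\mu_1(\theta^o)-\mu_1(\theta)|$ from the start cleanly handles the case of several $\theta$ with $a^*(\theta)=i$ (the paper's identification of $\{i\in A_k\}$ with the confidence condition for a \emph{single} fixed $\theta$ is a slight overclaim in that case), and you avoid manufacturing $m(k)$ and the claim the paper labels as a separate step. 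What the paper's route buys is explicit, if unwieldy, constants via its Lemma characterizing $C_i$.

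Two minor tightenings you may as well use: $n_1(t_k)\geq k-N_\infty$ rather than $(k-1)-N_\infty$, since initialization already contributes one pull of arm~1; and $n_j(t_k)\leq k$ (each arm is pulled at most once per episode), which gives $\Pp(F_k^c)=O(|A|/k^5)$ instead of $O(|A|^2/k^5)$. Neither affects the conclusion.
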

\begin{proof} 
	Consider an arm $i \in A\setminus C(\theta^{o}), i \neq 1$. Then, by definition, there exists a $\theta \in \Theta$ such that $a^{*}(\theta) = i$. Fix a $\theta$ which satisfies this condition. Define
	\begin{align*}
	\alpha_1(\theta) :=  |\mu_{1}(\theta^{o}) - \mu_{1}(\theta)|.
	\end{align*}
	It is straightforward to note that when $i \in A\setminus C(\theta^{o})$, then the $\theta$ which we considered above is not in $B(\theta^o)$. Hence, by definition, $\alpha_{1}(\theta) > 0$. 
	
	%
	
	For notational convenience, we will denote $\mu_{j}(\theta)$ simply as $\mu_{j}$, for any $j \in A$. Notice that the algorithm picks $i^{\text{th}}$ arm once in $t\in\{1,\ldots,|A|\}$. Define $K_T$ (note that this is a random variable) to be the total number of episodes in time horizon $T$ for the FP-UCB algorithm. It is straightforward that $K_T \leq T$. Now, 
	\begin{align}
	\E[ n_i(T)]  &=  1 + \E\left[\sum_{t=|A|+1}^{T} \mathbbm{1}\{a(t) = i\} \right] \nonumber\\
	& \stackrel{(a)}{=}  1 + \E\left[\sum_{k=1}^{K_T} (\mathbbm{1}\{i\in A_k\} + \mathbbm{1}\{A_k=\varnothing\}) \right] \nonumber\\
	& \leq  1+  \sum_{k=1}^T \left[\Pp\left(\left\{i\in A_k  \right\}\right) + \Pp\left(\left\{A_k=\varnothing \right\} \right)\right] 	\label{eq:first_partition} \\
	& =  1+  \sum_{k=1}^T \left[ \Pp\left(\left\{i\in A_k, 1 \in A_k  \right\}\right)+ \Pp\left(\left\{i\in A_k, 1 \notin A_k  \right\}\right) + \Pp\left(\left\{A_k=\varnothing \right\}\right)\right] \nonumber \\
	& \leq 1+  \sum_{k=1}^T \left[ \Pp\left(\left\{i\in A_k, 1 \in A_k  \right\}\right)+ \Pp\left(\left\{i\in A_k, 1 \notin A_k  \right\}\right)  + \Pp\left(\left\{i\notin A_k, 1 \notin A_k \right\}\right)\right] \nonumber \\
	\label{eq:first_split}
	& \leq  1+  \sum_{k=1}^T  [\Pp (\left\{i\in A_k, 1 \in A_k  \right\}) + \Pp (\left\{1 \notin A_k  \right\}) ].
	\end{align} 
	Here (a) follows from the algorithm definition.

	We will first analyze the second summation term in \eqref{eq:first_split}. First observe that, we can write $n_{j}(t_k) = 1+ \sum^{k-1}_{\tau=1} (\mathbbm{1}\{j \in A_\tau \} + \mathbbm{1}\{A_\tau = \varnothing \}) $ for any $j\in A$ and episode $k$. Thus, $n_{j}(t_k)$ lies between 1 and $k$. Now,
	\begin{align}
	\sum_{k=1}^T  \Pp (\left\{1 \notin A_k  \right\}) 
	&\stackrel{(a)}{=}  \sum_{k=1}^T  \Pp \left( \bigcup_{j\in A} \left\{|\hat{\mu}_j(t_k)-\mu^o_j| > \sqrt{\frac{3\log k}{n_j(t_k)}}\right\}  \right) \nonumber \\
	& \stackrel{(b)}{\leq}  \sum_{k=1}^T  \sum_{j\in A} \Pp \left( |\hat{\mu}_j(t_k)-\mu^o_j| >\sqrt{\frac{3\log k}{n_j(t_k)}} \right) \nonumber \\
	& \stackrel{(c)}{=}  \sum_{k=1}^T  \sum_{j\in A} \Pp \left( \left|\frac{1}{n_j(t_k)} \sum_{\tau=1}^{n_j(t_k)} X_{j}(\tau) -\mu^o_j \right| >\sqrt{\frac{3\log k}{n_j(t_k)}} \right) \nonumber \\
	& \stackrel{(d)}{\leq}  \sum_{k=1}^T  \sum_{j\in A} \sum_{m=1}^{k} \Pp \left( \left|\frac{1}{m} \sum_{\tau=1}^{m} X_{j}(\tau) -\mu^o_j \right| >\sqrt{\frac{3\log k}{m}} \right) \nonumber \\
	& \stackrel{(e)}{\leq}  \sum_{k=1}^T  \sum_{j\in A}  \sum_{m=1}^{k} 2\exp\left( -2 m {\frac{3\log k}{m}}  \right) =  \sum_{k=1}^T  \sum_{j\in A} 2k^{-5} \leq 4 |A| . \label{eq:term21}
	\end{align}
	Here (a) follows from algorithm definition, (b) from the union bound, and (c) from the definition in \eqref{eq:muhat}. Inequality (d) follows by conditioning the random variable $n_{j}(t_k)$ that lies between 1 and $k$ for any $j\in A$ and episode $k$. Inequality (e) follows from Hoeffding's inequality \cite[Theorem 2.2.6]{Vershynin}.
	
	For analyzing the first summation term in \eqref{eq:first_split}, define the event $E_k :=  \left\{ n_1(t_k) < {12\log k}/{\alpha^{2}_1(\theta)} \right\}.$ Denote the complement of this event as $E^{c}_k$. Now the first summation term in \eqref{eq:first_split} can be written as  \begin{align}
	&\hspace{-1cm}\sum_{k=1}^{T}  \Pp (\left\{i\in A_k, 1 \in A_k  \right\})  \nonumber \\
	\label{eq:AE-t1}
	&\hspace{-0.5cm} = \sum_{k=1}^{T}  \Pp (\left\{i\in A_k, 1 \in A_k, E^c_k  \right\})  \\
	\label{eq:AE-t2}
	&\hspace{0.5cm} + \sum_{k=1}^{T}  \Pp (\left\{i\in A_k, 1 \in A_k, E_k \right\}).
	\end{align}    
	
	Analyzing \eqref{eq:AE-t1}, we get,
	\begin{align}
	\Pp (\left\{i\in A_k, 1 \in A_k, E^{c}_k  \right\}) 
	&=  \Pp \left(  \bigcap_{j\in A} \{|\hat{\mu}_j(t_k)-\mu^o_j| < \sqrt{\frac{3\log k}{n_j(t_k)}} \} \bigcap_{j\in A} \{ |\hat{\mu}_j(t_k)-\mu_j| < \sqrt{\frac{3\log k}{n_j(t_k)}} \} \bigcap  E^{c}_k \right) \nonumber  \\
	\label{eq:zeroprob1}
	&\leq \Pp \left( \{|\hat{\mu}_1(t_k)-\mu^o_1| < \sqrt{\frac{3\log k}{n_1(t_k)}} \},  |\{ \hat{\mu}_1(t_k)-\mu_1| < \sqrt{\frac{3\log k}{n_1(t_k)}} \}, E^{c}_k \right) = 0.
	\end{align}
	This is because the events $\{|\hat{\mu}_1(t_k)-\mu^o_1| < \sqrt{\frac{3\log k}{n_1(t_k)}}\}$ and $\{|\hat{\mu}_1(t_k)-\mu_1| < \sqrt{\frac{3\log k}{n_1(t_k)}}\}$ are disjoint under $E_k^c$, that is, when $n_1(t_k) \geq {12\log (k)}/{\alpha^2_1(\theta)}$. To see this, notice that
	\begin{align*}
	\left\{|\hat{\mu}_1(t_k)-\mu^o_1| < \sqrt{\frac{3\log k}{n_1(t_k)}}\right\} &\subseteq \left\{|\hat{\mu}_1(t_k)-\mu^o_1| < \frac{\alpha_{1}(\theta)}{2}\right\}, \\
	\left\{|\hat{\mu}_1(t_k)-\mu_1| < \sqrt{\frac{3\log k}{n_1(t_k)}}\right\} &\subseteq
	\left\{|\hat{\mu}_1(t_k)-\mu_1| < \frac{\alpha_{1}(\theta)}{2}\right\},
	\end{align*} 
	for $n_1(t_k) \geq {12\log k}/{\alpha^2_1(\theta)}$. Moreover, since $|\mu^{o}_{1} - \mu_{1}| = \alpha_{1}(\theta)$, $\{|\hat{\mu}_1(t_k)-\mu^o_1| < {\alpha_{1}(\theta)}/{2}\}$ and $\{|\hat{\mu}_1(t_k)-\mu_1| < {\alpha_{1}(\theta)}/{2}\}$ are disjoint sets. Hence, their subsets are also disjoint.

	For analyzing \eqref{eq:AE-t2}, define $n'_{1}(t_k) := 1+ \sum^{k-1}_{\tau=1} \mathbbm{1}\{1 \in A_\tau \} $. Note that, according to the FP-UCB algorithm, arm $1$ can be selected if $A_\tau$ is empty as well, so $n'_{1}(t_k) \leq n_{1}(t_k)$.  Define $k_{i}(\theta)$ and $m(k)$ as,
	\begin{align} \label{eq:ktheta}
	&\hspace*{-0.3cm}k_{i}(\theta) := \min \left\{k:  k \geq 3,  k>\lceil 12\log (k)/\alpha^{2}_1(\theta) \rceil  \right\}, \\
	&\hspace*{-0.3cm}m(k) := \max\{1,  k-\lceil 12\log (k)/\alpha^{2}_1(\theta) \rceil \}.
	\end{align}
	Note that $k_{i}(\theta)$ is a problem dependent constant and does not depend on $T$. Also, $m(k) = k-\lceil 12\log (k)/\alpha^{2}_1(\theta) \rceil$ for all $k \geq k_{i}(\theta)$. We claim that for all $k \geq k_{i}(\theta)$,
	\begin{align}
	\label{eq:claim}
	&\left\{n'_1(t_k) < {12\log (k)}/{\alpha^{2}_{1}(\theta)} \right\} \subseteq \left\{1 \notin A_\tau, \text{for some } \tau,  m(k) \leq \tau \leq k-1 \right\}. 
	\end{align} 
	To see this, suppose there exists no $ \tau,$  $m(k) \leq \tau \leq k-1$, such that $1 \notin A_\tau$. Then, $1 \in  A_\tau$ for all $ \tau,$ where $m(k) \leq \tau \leq k-1$. So, by definition $n'_{1}(t_k) \geq (k-m(k))  = \lceil 12\log (k)/\alpha^{2}_1(\theta) \rceil$ for $k \geq k_{i}(\theta)$. So, the complement of the RHS  of \eqref{eq:claim} is a subset of the complement of the LHS  of \eqref{eq:claim}. Hence the claim follows.

	Now, 
	\begin{align}
	\sum_{k=1}^{T}  \Pp (\left\{i\in A_k, 1 \in A_k, E_k  \right\})  &\leq \nonumber \sum_{k=1}^{T}  \Pp (E_k  ) \\ &
	\stackrel{(a)}{\leq}  \sum_{k=1}^T \Pp \left( n'_1(t_k) < {12\log (k)}/{\alpha^2_{1}(\theta)} \right) \nonumber \\
	&\stackrel{(b)}{\leq}  k_{i}(\theta) +   \sum_{k=k_{i}(\theta)}^T \Pp ( n'_1(t_k) < {12\log (k)}/{\alpha^2_{1}(\theta)} ) \nonumber \\
	&\stackrel{(c)}{\leq}   k_{i}(\theta) + \sum_{k=k_{i}(\theta)}^T \Pp \left(  \left\{1 \notin A_\tau, \text{for some } \tau,  m(k) \leq \tau \leq k-1 \right\} \right) \nonumber \\
	&\stackrel{(d)}{=}   k_{i}(\theta) + \nonumber \sum_{k=k_{i}(\theta)}^T \Pp \left( \bigcup_{\tau=m(k)}^{k-1} \bigcup_{j\in A} |\hat{\mu}_j(\tau)-\mu^o_j| > \sqrt{\frac{3\log \tau}{n_{j}(t_\tau)}} \right)  \nonumber \\
	&\leq   k_{i}(\theta)+  \sum_{k=k_{i}(\theta)}^T  \sum^{k-1}_{\tau = m(k)} \sum_{j\in A} \Pp \left( |\hat{\mu}_j(\tau)-\mu^o_j| > \sqrt{\frac{3\log \tau}{n_{j}(t_\tau)}} \right) \nonumber \\
	&\stackrel{(e)}{\leq}   k_{i}(\theta) +  \sum_{k=k_{i}(\theta)}^T  \sum^{k-1}_{\tau = m(k)}  \frac{2|A|}{\tau^{5}} \\
	&\leq   k_{i}(\theta) +  \sum_{k=k_{i}(\theta)}^T  \frac{2|A| k}{(m(k))^{5}} \nonumber \\
	&=  k_{i}(\theta) +  \sum_{k=k_{i}(\theta)}^T  \frac{2|A| k}{(   k-\left\lceil \frac{12\log (k)}{\alpha^{2}_{1}(\theta)} \right\rceil  )^{5}} 
	\stackrel{(f)}{=}   k_{i}(\theta) + K_i(\theta), \label{eq:Ktheta}
	\end{align}
	where $K_i(\theta)$ is a problem dependent constant that does not depend on $T$. 
	
	In the above analysis, (a) follows from the definition of $E_k$ and the observation that $n'_{1}(t_k) \leq n_{1}(t_k)$. Considering $T$ to be greater than or equal to $k_i(\theta)|A|$, equality (b) follows; note that this is an artifact of the proof technique and does not affect the theorem statement since $\E[n_i(T')]$, for any $T'$ less than $k_i(\theta)|A|$, can be trivially upper bounded by $\E[n_i(T)]$. Inequality (c) follows from \eqref{eq:claim}, (d) by the FP-UCB algorithm, (e) is similar to the analysis in \eqref{eq:term21}, and (f) follows from the fact that  $k>\lceil 12\log (k)/\alpha^{2}_{1}(\theta) \rceil $ for all $k\geq k_{i}(\theta)$.
	
	Now, using \eqref{eq:Ktheta} and \eqref{eq:zeroprob1} in \eqref{eq:AE-t1} and \eqref{eq:AE-t2}, we get,
	\begin{align}
	\label{eq:putback1}
	\sum_{k=1}^{T}  \Pp (\left\{i\in A_k, 1 \in A_k  \right\})  \leq  k_{i}(\theta) + K_i(\theta).
	\end{align} 
	Using \eqref{eq:putback1} and \eqref{eq:term21} in \eqref{eq:first_split}, we get,
	\begin{align*}
	\mathbb{E}[n_{i}(T)] \leq C_i,
	\end{align*}
	where $C_i = 1+4|A|+\min_{\theta:a^*(\theta)=i} ( k_{i}(\theta) + K_i(\theta))$, which is a problem dependent constant that does not depend on $T$. This concludes the proof.
\end{proof}

 \begin{proposition} 
\label{prop:2}
For any $i\in C(\theta^o)$, under FP-UCB algorithm,	
 	\begin{align}
 	\E\left[ n_i(T)\right] \leq 2+4|A|+ \frac{12 \log(T)}{\beta^{2}_{i}}.
 	\end{align}
 \end{proposition}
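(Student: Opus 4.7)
The plan is to follow the template of the proof of Proposition~\ref{prop:1}, replacing the controlling gap $\alpha_1(\theta)$ at arm~$1$ by the gap $\beta_i$ at arm~$i$ itself. The reason for the swap is that since $i \in C(\theta^o)$, there is some $\theta \in B(\theta^o)$ with $a^*(\theta) = i$, and for such $\theta$ arm~$1$ is useless for separation because $\mu_1(\theta) = \mu_1(\theta^o)$ by the definition of $B(\theta^o)$. Lemma~\ref{lemma_beta_delta} tells us, in compensation, that arm~$i$ separates these candidate parameters by at least $\beta_i > 0$, and that is the quantity the algorithm will exploit.

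As in Proposition~\ref{prop:1}, I would start from
\[
\E[n_i(T)] \leq 1 + \sum_{k=1}^{T}\bigl[\Pp(i \in A_k) + \Pp(A_k = \varnothing)\bigr],
\]
and use $\{A_k = \varnothing\} \subseteq \{1 \notin A_k\}$ together with the Hoeffding computation in \eqref{eq:term21} to get $\sum_k \Pp(A_k = \varnothing) \leq 4|A|$. The core of the argument is then bounding $\sum_k \Pp(i \in A_k)$, for which I would introduce the event $E_k := \{n_i(t_k) < 12 \log k / \beta_i^2\}$ and split
\[
\sum_k \Pp(i \in A_k) = \sum_k \E\mathbbm{1}\{i \in A_k, E_k\} + \sum_k \Pp(i \in A_k, E_k^c).
\]

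For the first sum I would use a deterministic pathwise counting argument: enumerating the episodes $k_1 < k_2 < \cdots$ at which $i \in A_{k_m}$, arm $i$ is selected exactly once per such episode, so $n_i(t_{k_m}) \geq m$, and $E_{k_m}$ then forces $m < 12 \log T / \beta_i^2$, capping the number of such episodes by $\lceil 12 \log T / \beta_i^2 \rceil \leq 1 + 12 \log T / \beta_i^2$. For the second sum, on $E_k^c$ the confidence radius $\sqrt{3 \log k / n_i(t_k)}$ is below $\beta_i/2$, so any witness $\theta'$ certifying $i \in A_k$ satisfies $|\hat\mu_i(t_k) - \mu_i(\theta')| \leq \beta_i/2$. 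When $\theta' \in B(\theta^o)$, the definition of $\beta_i$ forces $|\mu_i(\theta^o) - \mu_i(\theta')| \geq \beta_i$, so $|\hat\mu_i(t_k) - \mu_i(\theta^o)| \geq \beta_i/2$, a Hoeffding-rare event of probability $\leq 2 k^{-6}$. When $\theta' \notin B(\theta^o)$ (so $\mu_1(\theta') \neq \mu_1(\theta^o)$, the Proposition~\ref{prop:1} regime), the disjoint-interval argument around \eqref{eq:zeroprob1} applied at arm~$1$ yields a summable bound. A union bound over the finitely many candidate witnesses keeps the total $O(|A|)$, and the pieces assemble into $\E[n_i(T)] \leq 2 + 4|A| + 12 \log T / \beta_i^2$.

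The main obstacle is the witness bifurcation in the second sum: since $\beta_i$ is defined only over $\theta \in B(\theta^o)$, witnesses $\theta' \notin B(\theta^o)$ do not directly see the $\beta_i$ separation at arm~$i$ and have to be handled via a Proposition~\ref{prop:1} style argument on arm~$1$. The bookkeeping needed to show that all these auxiliary terms collapse into the clean additive constants $2 + 4|A|$ stated in the proposition (rather than a messier expression depending on $|B(\theta^o)|$ or on individual $\alpha_1(\theta')$) is the most delicate part of the write-up.
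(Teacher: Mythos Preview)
Your overall strategy matches the paper's: threshold $n_i$ against $12\log(\cdot)/\beta_i^2$, count the sub-threshold pulls directly, and on the super-threshold side run the disjoint-interval argument at arm~$i$ (not arm~$1$) using the $\beta_i$-separation. Two organisational choices in the paper are what make the constant land exactly at $2+4|A|$. First, the threshold event is placed at the time-step level with $\log T$ rather than $\log k$: $F(t):=\{n_i(t-1)<12\log T/\beta_i^2\}$, so the counting part is simply $\sum_t\mathbbm{1}\{a(t)=i,F(t)\}\le 1+12\log T/\beta_i^2$ \emph{before} any episode decomposition; only the $F^c$ part is then broken into episodes and split as $\{i\in A_k,1\in A_k\}$ versus $\{1\notin A_k\}$, exactly as in Proposition~\ref{prop:1}. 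Second --- and this dissolves your ``main obstacle'' --- the paper does not bifurcate on witnesses at all: it fixes one $\theta\in B(\theta^o)$ with $a^*(\theta)=i$ at the outset and carries it through, precisely mirroring how Proposition~\ref{prop:1} carries its fixed $\theta$. Under $F^c$ the confidence radius at arm~$i$ is at most $\beta_i/2\le|\mu_i(\theta^o)-\mu_i(\theta)|/2$, so the argument of \eqref{eq:zeroprob1}, now applied at arm~$i$, gives $\Pp(i\in A_k,1\in A_k,F^c)=0$ outright; together with $\sum_k\Pp(1\notin A_k)\le4|A|$ this yields $2+4|A|$ with no residue. Your route, by contrast, would pick up a summable but nonzero Hoeffding tail for $B(\theta^o)$-witnesses and would import Prop-1-type constants $k_i(\theta')+K_i(\theta')$ for witnesses $\theta'\notin B(\theta^o)$ --- so your suspicion is right that those pieces do not collapse into the stated $2+4|A|$, and the paper's single-fixed-$\theta$ shortcut is exactly what buys the clean constant.
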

 \begin{proof}
 	Fix an $i \in C(\theta^{o})$. Then there exists a $\theta \in B(\theta^{o})$ such that $a^*(\theta)=i$. Fix a $\theta$ which satisfies this condition. Define the event $F(t) :=  \left\{ n_i(t-1) < {12\log T}/{\beta_{i}^{2}} \right\}.$
 	Now, \begin{align}
 	\E[ n_i(T)]  &= 1+ \E\left[\sum_{t=|A|+1}^{T} \mathbbm{1}\{a(t) = i\} \right] \nonumber \\
 	& = 1+ \E\left[\sum_{t=|A|+1}^{T} \mathbbm{1}\{a(t) = i,F(t)\} \right]  +  \E\left[\sum_{t=|A|+1}^{T} \mathbbm{1}\{a(t) = i,F^c(t)\} \right]. \label{eq:second_part}
 	\end{align} 
 	Analyzing the first summation term in \eqref{eq:second_part} we get,  \begin{align}
 	\E\left[\sum_{t=|A|+1}^{T} \mathbbm{1}\{a(t) = i,F(t)\} \right] &= \E\left[\sum_{t=|A|+1}^{T} \mathbbm{1}\{a(t) = i\} \mathbbm{1}\left\{ n_i(t-1) < {12\log T}/{\beta_{i}^{2}} \right\} \right] \nonumber \\
 	&\leq 1+{12\log T}/{\beta_{i}^{2}}. \label{eq:final_1}
 	\end{align} 
 	
 	We use the same decomposition as in the proof of Proposition \ref{prop:1} for the second summation term in \eqref{eq:second_part}. Thus we get, \begin{align}
 	\E\left[\sum_{t=|A|+1}^{T} \mathbbm{1}\{a(t) = i,F^c(t)\} \right] &=
 	 \E\left[\sum_{k=1}^{K_T} \mathbbm{1}\{i\in A_k, F^c(t_k + 1)\} + \mathbbm{1}\{A_k=\varnothing, F^c(t_k + 1)\} \right] \nonumber\\
 	& \label{eq:second_split_1}  \leq \sum_{k=1}^{T}  \Pp (\left\{i\in A_k, 1 \in A_k, F^c(t_k + 1)  \right\}) + \\& \hspace{1.5cm} \sum_{k=1}^{T} \Pp (\left\{1 \notin A_k, F^c(t_k + 1)  \right\}), \label{eq:second_split_2}
 	\end{align} 
 	following the analysis in \eqref{eq:first_split}. First, consider \eqref{eq:second_split_2}. From the analysis in \eqref{eq:term21} we have \begin{equation}
 	\sum_{k=1}^{T}  \Pp (\left\{1 \notin A_k , F^c(t_k + 1)  \right\}) \leq \sum_{k=1}^{T}  \Pp (\left\{1 \notin A_k  \right\})  \leq 4|A|. \label{eq:final_2}
 	\end{equation}
 	For any $i\in A$ and episode $k$ under event $F^c(t_k+1)$, we have \[n_i(t_k) \geq \frac{12\log T}{\beta_{i}^{2}}\geq \frac{12\log t_k}{\beta_{i}^{2}} \geq \frac{12\log k}{\beta_{i}^{2}}\] since $t_k$ satisfies $k\leq t_k \leq T$. From \eqref{eq:beta_i}, it further follows that \[ \sqrt{\frac{3\log k}{n_j(t_k)}} \leq \frac{\beta_{i}}{2} \leq \frac{|\mu_i(\theta^o)-\mu_i(\theta)|}{2}. \] So, following the analysis in \eqref{eq:zeroprob1} for \eqref{eq:second_split_1}, we get \begin{align}
 	&\Pp (\left\{i\in A_k, 1 \in A_k, F^{c}(t_k+1)  \right\})  \nonumber  \\
 	&=  \Pp \left( \begin{array}{ll} \bigcap_{j\in A} \{|\hat{\mu}_j(t_k)-\mu_j(\theta^o)| < \sqrt{\frac{3\log k}{n_j(t_k)}} \}, \\ \bigcap_{j\in A} \{ |\hat{\mu}_j(t_k)-\mu_j(\theta)| < \sqrt{\frac{3\log k}{n_j(t_k)}} \},  F^{c}(t_k+1)\end{array} \right) \nonumber  \\
 	\label{eq:final_3}
 	&\leq \Pp \left(  \{ |\hat{\mu}_i(t_k)-\mu_i(\theta^o)| < \sqrt{\frac{3\log k}{n_i(t_k)}} \},   \{|\hat{\mu}_i(t_k)-\mu_i(\theta)| < \sqrt{\frac{3\log k}{n_i(t_k)}} \}, F^{c}(t_k+1) \right) = 0.
 	\end{align}
 	Using equations \eqref{eq:final_1}, \eqref{eq:final_2}, and \eqref{eq:final_3} in \eqref{eq:second_part}, we get
 	\begin{align*}
 	\mathbb{E}[n_{i}(T)] \leq 2+4|A|+ \frac{12 \log(T)}{\beta^{2}_{i}}.
 	\end{align*}
 	This completes the proof.
 \end{proof}
 
We now give the proof of our main theorem.
\begin{proof}({\bf of Theorem  \ref{thm:fp-ucb}}) \\

From \eqref{eq:regret-defn2},
\begin{align}
\mathbb{E}[R(T)] = \sum_{i \in A} \Delta_{i} \mathbb{E}[n_{i}(T)] = \sum_{i \in A \setminus C(\theta^{o})} \Delta_{i} \mathbb{E}[n_{i}(T)] + \sum_{i \in  C(\theta^{o})} \Delta_{i} \mathbb{E}[n_{i}(T)]. \label{eq:main_regret_split}
\end{align}

Whenever $B(\theta^o)$ is empty, notice that $C(\theta^o)$ is empty. So, using Proposition \ref{prop:1}, \eqref{eq:main_regret_split} becomes \begin{align*}
\mathbb{E}[R(T)] &= \sum_{i \in A} \Delta_{i} \mathbb{E}[n_{i}(T)] \leq \sum_{i \in A} \Delta_{i} C_i \leq |A| \max_{i \in A} \Delta_{i} C_i.
\end{align*}

Whenever $B(\theta^o)$ is non-empty, $C(\theta^o)$ is non-empty. Analyzing \eqref{eq:main_regret_split}, we get,
\begin{align*}
\mathbb{E}[R(T)] &= \sum_{i \in A \setminus C(\theta^{o})} \Delta_{i} \mathbb{E}[n_{i}(T)] + \sum_{i \in  C(\theta^{o})} \Delta_{i} \mathbb{E}[n_{i}(T)] \\
&\stackrel{(a)}{\leq} \sum_{i \in A \setminus C(\theta^{o})} \Delta_{i} C_i + \sum_{i \in  C(\theta^{o})} \Delta_{i} \mathbb{E}[n_{i}(T)] \\
&\stackrel{(b)}{\leq} \sum_{i \in A \setminus C(\theta^{o})} \Delta_{i} C_i + \sum_{i \in  C(\theta^{o})} \Delta_{i} \left(2+4|A|+ \frac{12 \log(T)}{\beta^{2}_{i}}\right)\\
&\leq |A| \max_{i \in A} \Delta_{i} (2+C_i + 4|A|)+ 12\log(T) \sum_{i \in C(\theta^{o})} \frac{\Delta_{i}}{\beta^{2}_{i}}. 
\end{align*} 
Here (a) follows from Proposition \ref{prop:1} and (b) from Proposition \ref{prop:2}. Setting \begin{align} \label{eq:d1d2}
D_1 := |A| \max_{i \in A} \Delta_{i} C_i \text{ and } 
D_2 := |A| \max_{i \in A} \Delta_{i} (2+C_i + 4|A|)
\end{align} proves the regret bounds in \eqref{eq:thm1_1} of the theorem.
\end{proof}

We now provide the following lemma to characterize the problem dependent constants $C_{i}$ given in Proposition \ref{prop:1}. The proof for this lemma is given in the appendix.
\begin{lemma}\label{lemma_c_i}
	Under the hypotheses in Proposition \ref{prop:1}, we have
	\begin{align*}
	C_i  &\leq 1+4|A|+\min_{\theta:a^*(\theta)=i} ( 2E_i(\theta) (E_i(\theta)+1)   |A| + 4|A| \alpha^{10}_{1}(\theta)), 
	\end{align*}
	\text{where}  $E_i(\theta) =  \max\{ 3, \lceil 144/\alpha^{4}_1(\theta) \rceil  \}$ and $ \alpha_{1}(\theta) = |\mu_{1}(\theta^{o}) - \mu_{1}(\theta)|$. 
\end{lemma}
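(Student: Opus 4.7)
The plan is to unpack the explicit formula
$C_i = 1 + 4|A| + \min_{\theta : a^*(\theta) = i}(k_i(\theta) + K_i(\theta))$
produced in the proof of Proposition~\ref{prop:1} and bound each of $k_i(\theta)$ and $K_i(\theta)$ separately so that their sum is at most $2|A|E_i(\theta)(E_i(\theta)+1) + 4|A|\alpha_1^{10}(\theta)$. The analytic workhorse for both bounds is the convex function $f(k) := \alpha_1^2(\theta)k/24 - \log(k)$, whose unique minimum sits at $k = 24/\alpha_1^2(\theta)$. Because $\alpha_1(\theta) \leq 1$ (rewards lie in $[0,1]$), one has $144/\alpha_1^4(\theta) \geq 24/\alpha_1^2(\theta)$, so $f$ is monotonically increasing on $[144/\alpha_1^4(\theta),\infty)$, and a direct calculation gives $f(144/\alpha_1^4(\theta)) = 6/\alpha_1^2(\theta) + 4\log(\alpha_1(\theta)) - \log(144) > 0$ for every $\alpha_1(\theta)\in(0,1]$. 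Consequently $12\log(k)/\alpha_1^2(\theta) \leq k/2$ for all $k \geq 144/\alpha_1^4(\theta)$.

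Using this inequality at $k = E_i(\theta) \geq 144/\alpha_1^4(\theta)$ immediately gives $\lceil 12\log(E_i(\theta))/\alpha_1^2(\theta)\rceil < E_i(\theta)$, so $E_i(\theta)$ satisfies the conditions defining $k_i(\theta)$ from \eqref{eq:ktheta}, whence $k_i(\theta) \leq E_i(\theta)$.

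Next I would split $K_i(\theta) = \sum_{k=k_i(\theta)}^{T} 2|A|k/(k - \lceil 12\log(k)/\alpha_1^2(\theta)\rceil)^5$ at $k = E_i(\theta)$. On the short range $k_i(\theta) \leq k < E_i(\theta)$, the denominator is at least $1$ by the very definition of $k_i(\theta)$, and there are at most $E_i(\theta)$ terms each bounded by $2|A|E_i(\theta)$, yielding a contribution of at most $2|A|E_i(\theta)^2$. On the tail $k \geq E_i(\theta)$, the bound from the first paragraph gives $k - \lceil 12\log(k)/\alpha_1^2(\theta)\rceil \geq k/2 - 1 \geq k/4$ (where the last step uses $E_i(\theta) \geq 144 \geq 4$), so each summand is at most $2^{11}|A|/k^4$. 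An integral comparison $\sum_{k \geq N} k^{-4} = O(N^{-3})$ with $N = E_i(\theta) \geq 144/\alpha_1^4(\theta)$ yields a tail bound of order $|A|\alpha_1^{12}(\theta)$; after plugging in the explicit multiplicative constants and using $\alpha_1(\theta) \leq 1$ (so $\alpha_1^{12}(\theta) \leq \alpha_1^{10}(\theta)$), this is absorbed into $4|A|\alpha_1^{10}(\theta)$.

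Collecting the estimates yields $k_i(\theta) + K_i(\theta) \leq E_i(\theta) + 2|A|E_i(\theta)^2 + 4|A|\alpha_1^{10}(\theta) \leq 2|A|E_i(\theta)(E_i(\theta)+1) + 4|A|\alpha_1^{10}(\theta)$, the final step being routine since $|A| \geq 1$; taking the minimum over $\theta$ with $a^*(\theta) = i$ and substituting into the $C_i$ formula from Proposition~\ref{prop:1} completes the proof. The main technical obstacle is the sharp convexity argument in the first paragraph: the crude estimate $\log(k) \leq \sqrt{k}$ only certifies the $k/2$ separation from $k \geq 576/\alpha_1^4(\theta)$, too weak to accommodate the threshold $144/\alpha_1^4(\theta)$ baked into $E_i(\theta)$; exploiting the exact minimum location of $f$ is what makes the stated constants work.
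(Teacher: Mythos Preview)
Your proof is correct and follows the same high-level structure as the paper---bound $k_i(\theta)\le E_i(\theta)$, split $K_i(\theta)$ at $E_i(\theta)$, handle the short range by the trivial denominator bound, and control the tail---but the elementary analytic tool you use is different. The paper relies throughout on the inequality $\log(x)\le (x-1)/\sqrt{x}$: this immediately gives $12\log(k)/\alpha_1^2(\theta)<k-1$ once $k>144/\alpha_1^4(\theta)$ (hence $k_i(\theta)\le E_i(\theta)$), and for the tail it produces the lower bound $k-\lceil 12\log(k)/\alpha_1^2(\theta)\rceil \ge (k-1)(\alpha_1^2(\theta)\sqrt{k}-12)/(\alpha_1^2(\theta)\sqrt{k})$, leading to summands that decay like $k^{-3/2}$ and a tail bounded by $4|A|\alpha_1^{10}(\theta)$ after an integral estimate. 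You instead analyze the convex function $f(k)=\alpha_1^2(\theta)k/24-\log k$ directly, locating its minimum at $24/\alpha_1^2(\theta)$ and verifying $f(144/\alpha_1^4(\theta))>0$; this yields the cleaner separation $k-\lceil 12\log(k)/\alpha_1^2(\theta)\rceil\ge k/4$ on the whole tail, so your summands decay like $k^{-4}$ and the tail contribution is of order $|A|\alpha_1^{12}(\theta)$, comfortably absorbed into $4|A|\alpha_1^{10}(\theta)$. Your remark that the cruder $\log k\le\sqrt{k}$ would only kick in at $576/\alpha_1^4(\theta)$ is well taken; both your convexity argument and the paper's $(x-1)/\sqrt{x}$ bound are exactly what is needed to hit the $144/\alpha_1^4(\theta)$ threshold.
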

Now, using the above lemma with \eqref{eq:d1d2}, we have a characterization of the problem dependent constants in Theorem \ref{thm:fp-ucb}.

\section{Simulations}
In this section, we present detailed numerical simulation to illustrate the performance of FP-UCB algorithm compared to the other standard multi-armed bandits algorithms.

We first consider a simple setting to illustrate intuition behind  FP-UCB algorithm.  Consider $\Theta = \{\theta^{1}, \theta^{2}\} $ with $[\mu_{1}(\theta^{1}) , \mu_{2}(\theta^{1})]= [0.9, 0.5]$ and  $[\mu_{1}(\theta^{2}) , \mu_{2}(\theta^{2})]= [0.2, 0.5]$. Consider the reward distributions $P_i,i=1,2$ to be Bernoulli. Clearly, $a^{*}(\theta^{1}) = 1$ and  $a^{*}(\theta^{2}) = 2$. 

Suppose the true parameter is $\theta^{1}$, i.e., $\theta^{o} = \theta^{1}$. Then, it is easy to note that, in this case $B(\theta^{o})$ is empty, and hence $C(\theta^{o})$ is empty. So, according to Theorem \ref{thm:fp-ucb}, FP-UCB will achieve an $O(1)$ regret. The performance of the algorithm for this setting is shown in Fig. \ref{fig:O1regret-1}. Indeed, the regret doesn't increase after some time steps, which shows the bounded regret property. 
We note that in all the figures, the regret is averaged over $10$ runs, with the thick line showing the average regret and the band around shows the $\pm 1$ standard deviation. 

Now, suppose the true parameter is $\theta^{2}$, i.e., $\theta^{o} = \theta^{2}$. In this case $B(\theta^{o})$ is non-empty. In fact, $B(\theta^{o}) = \theta^1$ and  $C(\theta^{o}) = 1$. So, according to Theorem \ref{thm:fp-ucb}, FP-UCB will achieve an $O(\log T)$ regret. The performance of the algorithm shown in Fig. \ref{fig:logregret-1} suggests the same. Fig. \ref{fig:logregret-2}  plots the regret scaled by $\log t$, and the curve converges to a constant value, confirming the $O(\log T)$ regret performance.

We consider a problem with 4 arms where the mean values for the arms (corresponding to the true parameter $\theta^{o}$) are  $\mu(\theta^{o}) = [0.6, 0.4, 0.3, 0.2 ]$.  Consider the parameter set  $\Theta$ such that $\mu(\theta)$ for any $\theta$ is a permutation of $\mu(\theta^{o})$. Note that the cardinality of the parameter set, $|\Theta| = 24$, in this case.
 It is straightforward to show that $B(\theta^{o})$ is empty for this case. We compare the performance of FP-UCB algorithm for this case with  two standard multi-armed bandits algorithms.  Fig.  \ref{fig:comparision-1} shows the performance of standard UCB algorithm and that of FP-UCB algorithm.    Fig.  \ref{fig:comparision-2} compares the performance of standard Thompson sampling algorithm with that of FP-UCB algorithm.  The standard bandits algorithm incurs an increasing regret, while FP-UCB achieves a bounded regret. For $\mu(\theta') = [0.4, 0.6, 0.3, 0.2 ]$, we have $a^*(\theta')=2.$ Now we give a typical value for the $k_2(\theta')$, defined in \eqref{eq:ktheta}, used in the proof. For this $\theta'$ we have $k_2(\theta')= \min \left\{k:  k \geq 3,  k>\lceil 12\log (k)/\alpha^{2}_1(\theta') \rceil  \right\}= \min \left\{k:  k \geq 3,  k>\lceil 12\log (k)/0.2^2 \rceil  \right\}=2326$ since $\alpha_1(\theta')=0.2.$ When the reward distributions are not necessarily Bernoulli, note that $k_i(\theta)$ is $3$ for any $\theta$ with $a^*(\theta)=i$ satisfying $\alpha_1(\theta)>2\sqrt{3/e}.$
 
As before assume that $\mu(\theta^{o}) = [0.6, 0.4, 0.3, 0.2 ]$. But consider a larger parameter set $\Theta$ such that for any $\theta \in \Theta$, $\mu(\theta) \in \{0.6, 0.4, 0.3, 0.2\}^{4}$. Note that, due to repetitions in the mean rewards for the arms, definition of $a^*(\theta)$ needs to be updated, and the algorithmic way is to pick the minimum arm index out of which are having the same mean rewards. For example, consider $\mu(\theta) = [0.5, 0.6, 0.6, 0.2 ]$, and so as per our new definition, $a^*(\theta)=2$. Even in this scenario, we have $B(\theta^o)$ to be empty. Thus, FP-UCB achieves an $O(1)$ regret rather than $O(\log(T))$ as opposed to standard UCB algorithm and Thompson sampling algorithm.

We now consider a case where FP-UCB incurs an increasing regret. We again consider a problem with 4 arms where the mean values for the arms are  $\mu(\theta^{o}) = [0.4, 0.3, 0.2, 0.2 ]$. But consider a larger parameter set $\Theta$ such that for any $\theta \in \Theta$, $\mu(\theta) \in \{0.6, 0.4, 0.3, 0.2\}^{4}$. Note that the cardinality of $\Theta$, $|\Theta| = 4^{4}$ in this case. It is easy to observe that $B(\theta^{o})$ is non-empty, for instance $\theta$ with mean arm values $[0.4,0.6,0.3,0.2]$ is in $B(\theta^o)$. %
Fig. \ref{fig:comparision-3} compares the performance of standard UCB and FP-UCB algorithms for this case. We see FP-UCB incurring $O(\log(T))$ regret here. Also note that the performance of the FP-UCB in this case also is superior to the standard UCB algorithm.

\begin{figure*}[h!]
	\centering
	\begin{minipage}{.3\textwidth}
		\centering
		\includegraphics[width=\linewidth]{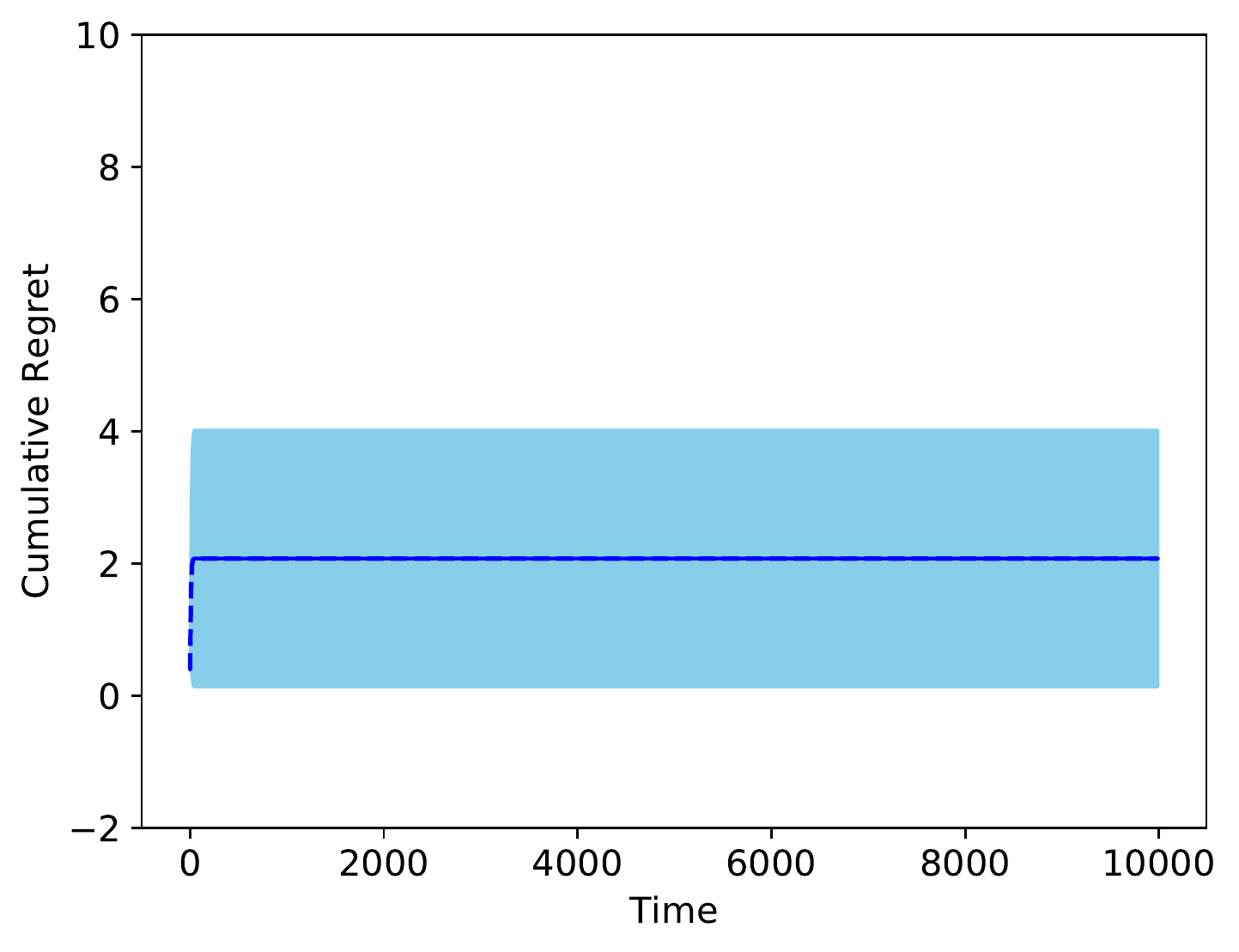}
		\captionof{figure}{}
		\label{fig:O1regret-1}
	\end{minipage}%
	\begin{minipage}{.3\textwidth}
		\centering
		\includegraphics[width=\linewidth]{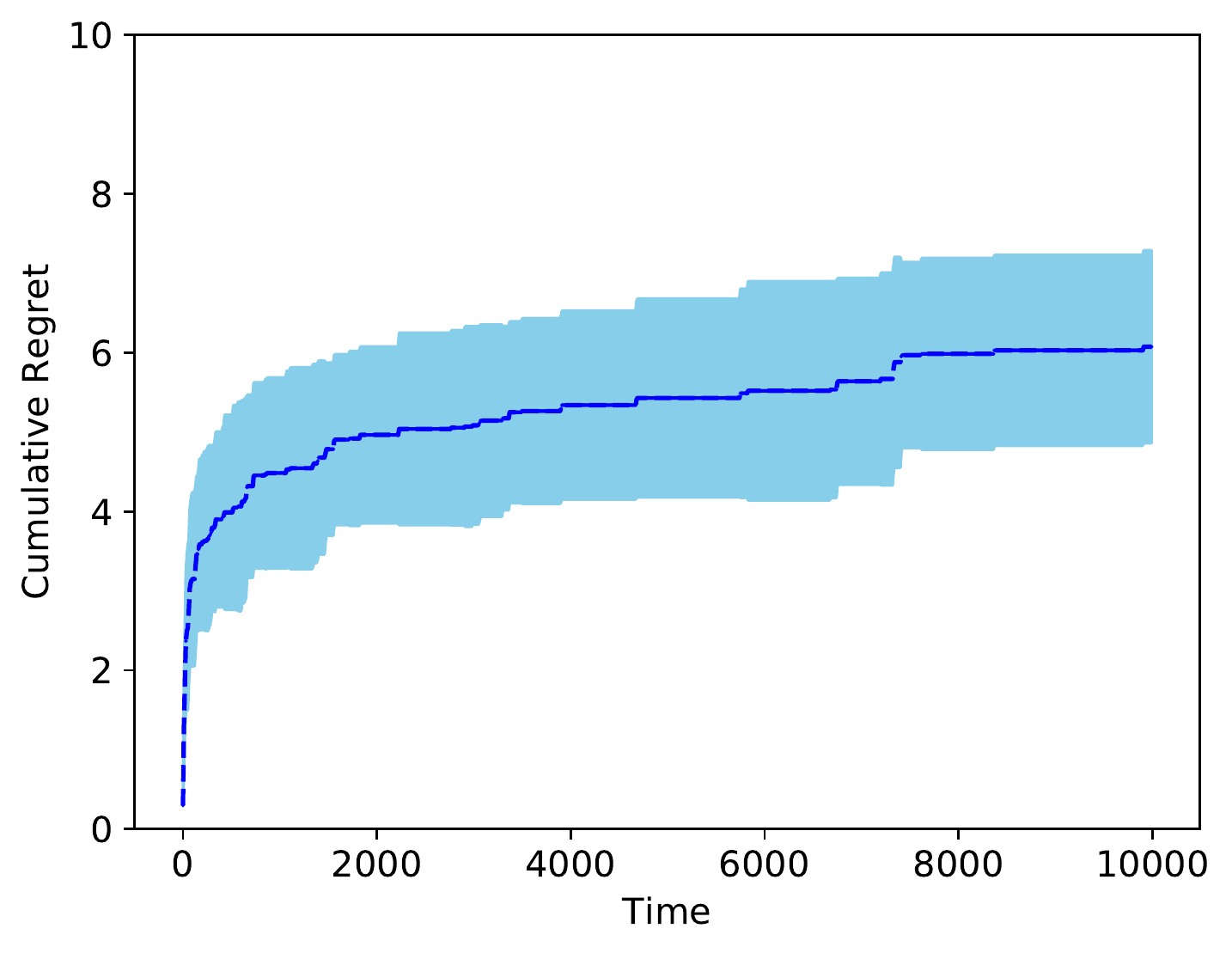}
		\captionof{figure}{}
		\label{fig:logregret-1}
	\end{minipage}
	\begin{minipage}{.3\textwidth}
		\centering
		\includegraphics[width=\linewidth]{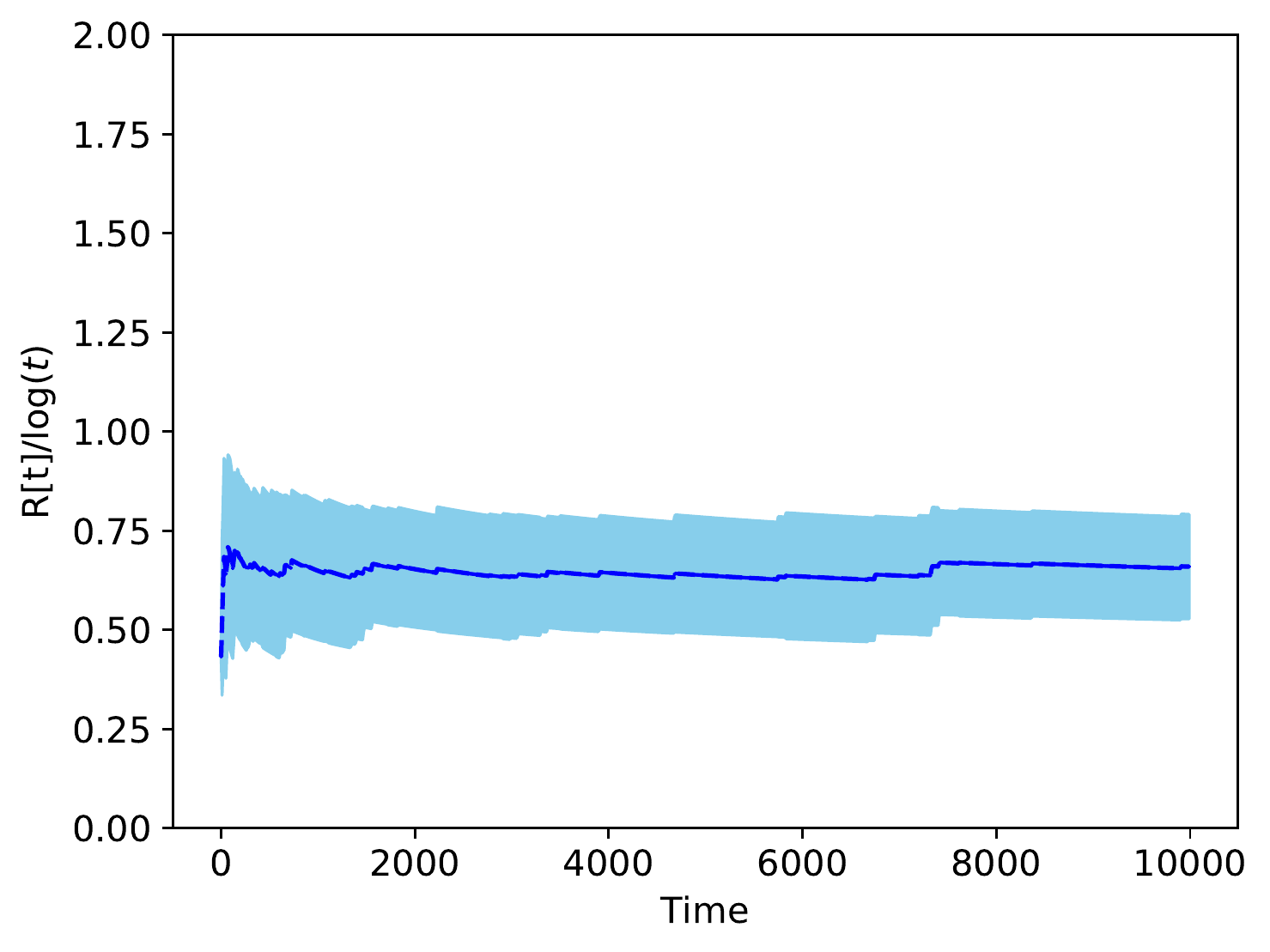}
		\captionof{figure}{}
		\label{fig:logregret-2}
	\end{minipage}
\end{figure*}

\begin{figure*}[h!]
	\centering
	\begin{minipage}{.3\textwidth}
		\centering
		\includegraphics[width=\linewidth]{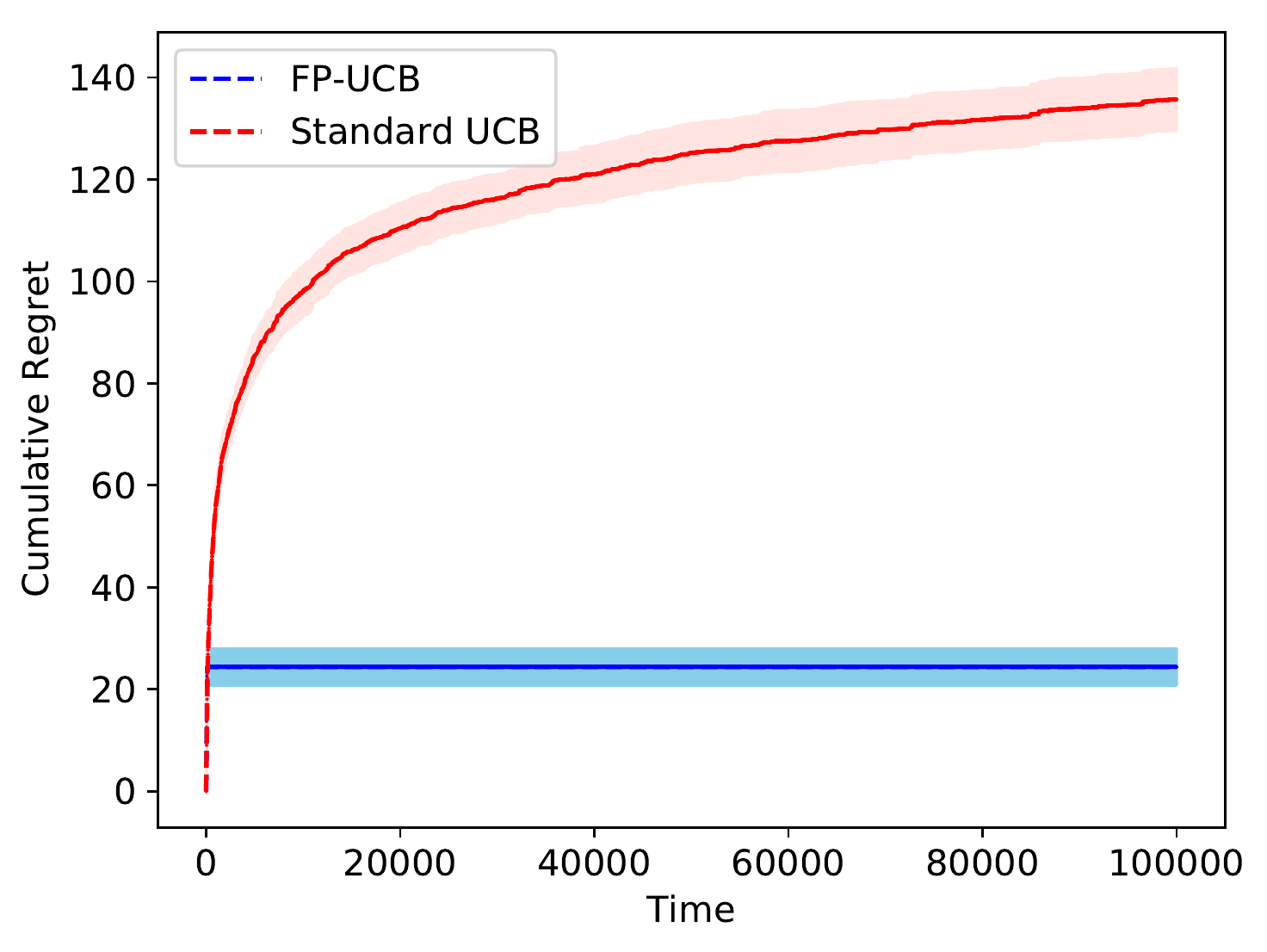}
		\captionof{figure}{}
		\label{fig:comparision-1}
	\end{minipage}
	\begin{minipage}{.3\textwidth}
		\centering
		\includegraphics[width=\linewidth]{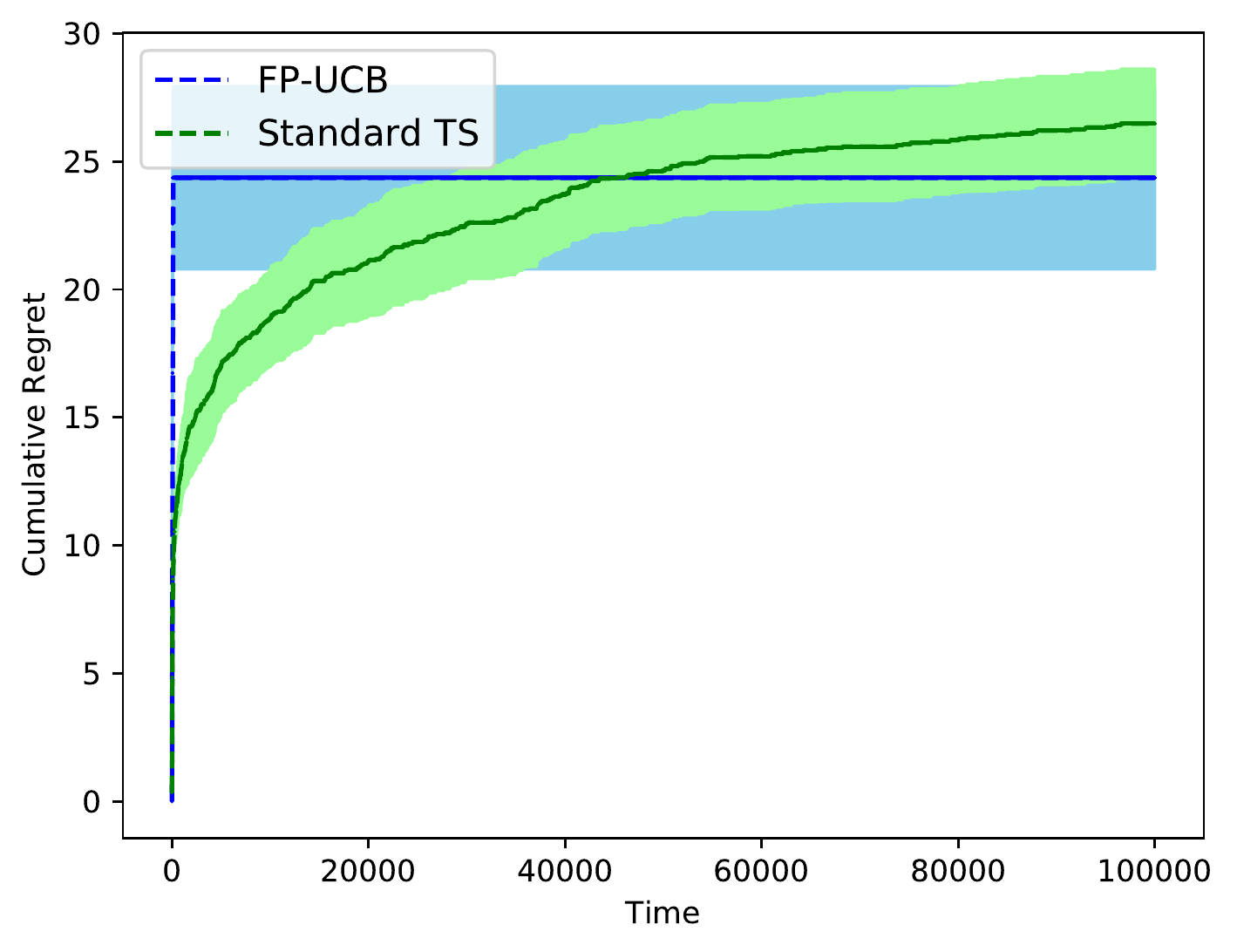}
		\captionof{figure}{}
		\label{fig:comparision-2}
	\end{minipage}
	\begin{minipage}{.3\textwidth}
	\centering
	\includegraphics[width=\linewidth]{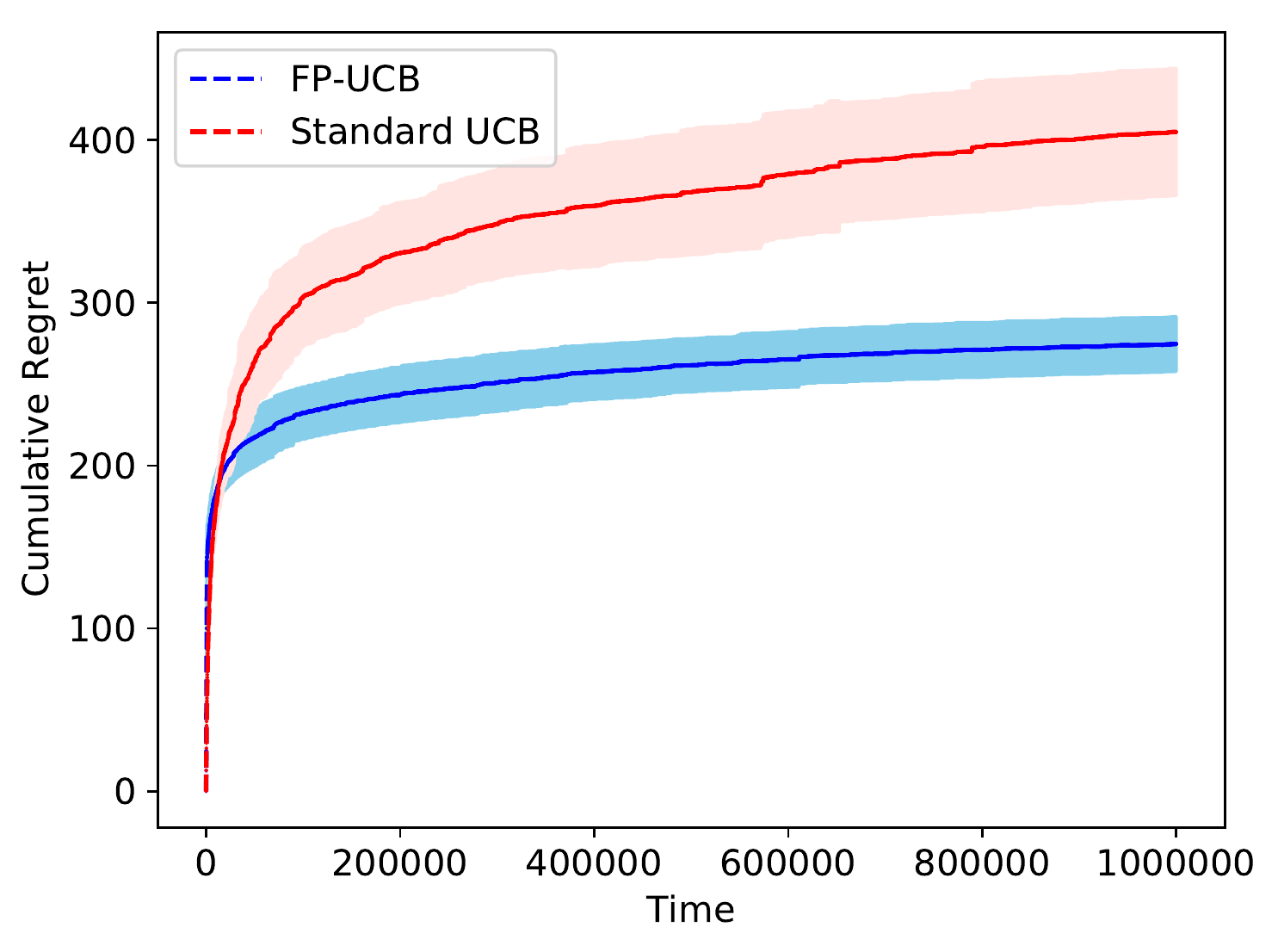}
	\captionof{figure}{}
	\label{fig:comparision-3}
\end{minipage}
	
\end{figure*}

\section{Conclusion and  Future Work} 

We proposed an algorithm for finitely parameterized multi-armed bandits. Our FP-UCB algorithm achieves bounded regret if the parameter set satisfies some necessary condition and logarithmic regret in other cases. In both cases, the theoretical performance guarantees for our algorithm are superior to the standard UCB algorithm for multi-armed bandits. Our algorithm also shows superior numerical performance. 


In the future, we will extend this approach to linear bandits and contextual bandits. Reinforcement learning problems where the underlying MDP is finitely parameterized is another research direction we plan to explore.  We will also develop similar algorithms using Thompson sampling approaches. 

%


\bibliographystyle{ieeetr}
\bibliography{MAB-References}

\begin{thebibliography}{10}

\bibitem{lai1985asymptotically}
T.~L. Lai and H.~Robbins, ``Asymptotically efficient adaptive allocation
  rules,'' {\em Advances in applied mathematics}, vol.~6, no.~1, pp.~4--22,
  1985.

\bibitem{auer2002finite}
P.~Auer, N.~Cesa-Bianchi, and P.~Fischer, ``Finite-time analysis of the
  multiarmed bandit problem,'' {\em Machine learning}, vol.~47, no.~2-3,
  pp.~235--256, 2002.

\bibitem{dani2008stochastic}
V.~Dani, T.~P. Hayes, and S.~M. Kakade, ``Stochastic linear optimization under
  bandit feedback,'' in {\em COLT}, 2008.

\bibitem{chu2011contextual}
W.~Chu, L.~Li, L.~Reyzin, and R.~Schapire, ``Contextual bandits with linear
  payoff functions,'' in {\em Proceedings of the Fourteenth International
  Conference on Artificial Intelligence and Statistics}, pp.~208--214, 2011.

\bibitem{cesa2012combinatorial}
N.~Cesa-Bianchi and G.~Lugosi, ``Combinatorial bandits,'' {\em Journal of
  Computer and System Sciences}, vol.~78, no.~5, pp.~1404--1422, 2012.

\bibitem{kalathil2014decentralized}
D.~Kalathil, N.~Nayyar, and R.~Jain, ``Decentralized learning for multiplayer
  multiarmed bandits,'' {\em IEEE Transactions on Information Theory}, vol.~60,
  no.~4, pp.~2331--2345, 2014.

\bibitem{thompson1933likelihood}
W.~R. Thompson, ``On the likelihood that one unknown probability exceeds
  another in view of the evidence of two samples,'' {\em Biometrika}, vol.~25,
  no.~3/4, pp.~285--294, 1933.

\bibitem{agrawal2012analysis}
S.~Agrawal and N.~Goyal, ``Analysis of thompson sampling for the multi-armed
  bandit problem,'' in {\em Proceedings of the 25th Annual Conference on
  Learning Theory}, vol.~23, pp.~39.1--39.26, PMLR, 2012.

\bibitem{komiyama2015optimal}
J.~Komiyama, J.~Honda, and H.~Nakagawa, ``Optimal regret analysis of thompson
  sampling in stochastic multi-armed bandit problem with multiple plays,'' in
  {\em International Conference on Machine Learning}, pp.~1152--1161, 2015.

\bibitem{agrawal2013thompson}
S.~Agrawal and N.~Goyal, ``Thompson sampling for contextual bandits with linear
  payoffs,'' in {\em International Conference on Machine Learning},
  pp.~127--135, 2013.

\bibitem{gopalan2014thompson}
A.~Gopalan, S.~Mannor, and Y.~Mansour, ``Thompson sampling for complex online
  problems,'' in {\em International Conference on Machine Learning},
  pp.~100--108, 2014.

\bibitem{osband2013more}
I.~Osband, D.~Russo, and B.~Van~Roy, ``(more) efficient reinforcement learning
  via posterior sampling,'' in {\em Advances in Neural Information Processing
  Systems}, pp.~3003--3011, 2013.

\bibitem{tekin2012approximately}
C.~Tekin and M.~Liu, ``Approximately optimal adaptive learning in opportunistic
  spectrum access,'' in {\em 2012 Proceedings IEEE INFOCOM}, pp.~1548--1556,
  IEEE, 2012.

\bibitem{kalathil2015online}
D.~Kalathil and R.~Rajagopal, ``Online learning for demand response,'' in {\em
  2015 53rd Annual Allerton Conference on Communication, Control, and Computing
  (Allerton)}, pp.~218--222, IEEE, 2015.

\bibitem{zong2016cascading}
S.~Zong, H.~Ni, K.~Sung, N.~R. Ke, Z.~Wen, and B.~Kveton, ``Cascading bandits
  for large-scale recommendation problems,'' in {\em Proceedings of the
  Thirty-Second Conference on Uncertainty in Artificial Intelligence},
  pp.~835--844, AUAI Press, 2016.

\bibitem{rajeev1989asymptotically}
R.~Agrawal, D.~Teneketzis, and V.~Anantharam, ``Asymptotically efficient
  adaptive allocation schemes for controlled iid processes: Finite parameter
  space,'' {\em IEEE Transactions on Automatic Control}, vol.~34, no.~3,
  pp.~258--267, 1989.

\bibitem{atan2015global}
O.~Atan, C.~Tekin, and M.~Schaar, ``Global multi-armed bandits with h{\"o}lder
  continuity,'' in {\em Artificial Intelligence and Statistics}, pp.~28--36,
  2015.

\bibitem{lattimore2014bounded}
T.~Lattimore and R.~Munos, ``Bounded regret for finite-armed structured
  bandits,'' in {\em Advances in Neural Information Processing Systems},
  pp.~550--558, 2014.

\bibitem{maillard2014latent}
O.-A. Maillard and S.~Mannor, ``Latent bandits.,'' in {\em International
  Conference on Machine Learning}, pp.~136--144, 2014.

\bibitem{combes2017minimal}
R.~Combes, S.~Magureanu, and A.~Proutiere, ``Minimal exploration in structured
  stochastic bandits,'' in {\em Advances in Neural Information Processing
  Systems}, pp.~1763--1771, 2017.

\bibitem{bubeck2013bounded}
S.~Bubeck, V.~Perchet, and P.~Rigollet, ``Bounded regret in stochastic
  multi-armed bandits,'' in {\em Conference on Learning Theory}, pp.~122--134,
  2013.

\bibitem{bubeck2013prior}
S.~Bubeck and C.-Y. Liu, ``Prior-free and prior-dependent regret bounds for
  thompson sampling,'' in {\em Advances in Neural Information Processing
  Systems}, pp.~638--646, 2013.

\bibitem{vakili2013achieving}
S.~Vakili and Q.~Zhao, ``Achieving complete learning in multi-armed bandit
  problems,'' in {\em 2013 Asilomar Conference on Signals, Systems and
  Computers}, pp.~1778--1782, IEEE, 2013.

\bibitem{wang2005bandit}
C.-C. Wang, S.~R. Kulkarni, and H.~V. Poor, ``Bandit problems with side
  observations,'' {\em IEEE Transactions on Automatic Control}, vol.~50, no.~3,
  pp.~338--355, 2005.

\bibitem{caron2012leveraging}
S.~Caron, B.~Kveton, M.~Lelarge, and S.~Bhagat, ``Leveraging side observations
  in stochastic bandits,'' {\em Conference on Uncertainty in Artificial
  Intelligence}, 2012.

\bibitem{bastani2017mostly}
H.~Bastani, M.~Bayati, and K.~Khosravi, ``Mostly exploration-free algorithms
  for contextual bandits,'' {\em arXiv:1704.09011}, 2017.

\bibitem{ljung1998system}
L.~Ljung, {\em System Identification: Theory for the User}.
\newblock Prentice Hall, New Jersey, 1998.

\bibitem{kumar2015stochastic}
P.~R. Kumar and P.~Varaiya, {\em Stochastic systems: Estimation,
  identification, and adaptive control}, vol.~75.
\newblock SIAM, 2015.

\bibitem{Vershynin}
R.~Vershynin, {\em High-Dimensional Probability: An Introduction with
  Applications in Data Science}.
\newblock Cambridge Series in Statistical and Probabilistic Mathematics,
  Cambridge University Press, 2018.

\end{thebibliography}

\section*{Appendix}

\subsection{Proof of Lemma \ref{lemma_beta_delta}}


\begin{proof}
	Fix an $i \in C(\theta^{o})$. Then there exists a $\theta \in B(\theta^{o})$ such that $a^*(\theta)=i$. For this $\theta$, by the definition of $B(\theta^{o})$, we have
	\begin{align}
	\label{eq:lemma_1}
	\mu_1(\theta^o) = \mu_1(\theta).
	\end{align}
	Using Assumption \ref{assumption:unique}, it follows that
	\[\hspace{-0.25cm} \mu_{i}(\theta) = \mu_{a^{*}(\theta)}(\theta) > \mu_1(\theta) = \mu_1(\theta^o) =   \mu_{a^{*}(\theta^{o})}(\theta^{o})   > \mu_{i}(\theta^o). \] 
	Thus, $\beta_{i}=\min_{\theta:\theta\in B(\theta^o),a^*(\theta)=i} |\mu_{i}(\theta^o) - \mu_{i}(\theta) | > 0.$

	Now, for any given $\theta$  considered above, suppose $|\mu_{i}(\theta) - \mu_{i}(\theta^o) |\leq \Delta_{i}$. Since $\Delta_{i} > 0$ by definition, this implies that
	\begin{align*}
	\mu_{a^{*}(\theta)}(\theta) =  \mu_{i}(\theta)  \leq \Delta_i + \mu_{i}(\theta^o) ~\substack{(a)\\=}~ \mu_1(\theta^o) - \mu_i(\theta^o) + \mu_{i}(\theta^o) = \mu_1(\theta^{o})~ \substack{(b)\\=}~  \mu_1(\theta),
	\end{align*} 
	where (a) follows from definition of $\Delta_i$ and (b) from \eqref{eq:lemma_1}. This is a contradiction because  $\mu_{a^{*}(\theta)}(\theta)>\mu_1(\theta)$. 
	
	Thus,  $|\mu_{i}(\theta) - \mu_{i}(\theta^o) | > \Delta_{i}$  for any $\theta\in B(\theta^o)$ such that $a^*(\theta)=i$. So, $\beta_{i} > \Delta_{i}.$
\end{proof}

\subsection{Proof of Lemma \ref{lemma_c_i}}

\begin{proof}
	We have  $C_i= 1+4|A|+\min_{\theta:a^*(\theta)=i} ( k_{i}(\theta) + K_i(\theta))$.
	
	First recall that $k_{i}(\theta) := \min \left\{k:  k \geq 3,  k>\lceil 12\log (k)/\alpha^{2}_1(\theta) \rceil  \right\}$. Since $\log(x) \leq (x-1)/\sqrt{x}$ for all $1 \leq x < \infty$, we have \[\left\{k:  k \geq 3,  k> \frac{12(k-1)}{\alpha^{2}_1(\theta) \sqrt{k}} + 1 \right\} \subseteq \left\{k:  k \geq 3,  k>\lceil 12\log (k)/\alpha^{2}_1(\theta) \rceil  \right\}.\] The Left-Hand-Side of the above equation simplifies to $\left\{k:  k \geq 3,  k>144/\alpha^{4}_1(\theta)  \right\} $. Thus, we have $k_{i}(\theta) \leq \max\{ 3, \lceil 144/\alpha^{4}_1(\theta) \rceil  \}.$
	
	Now, recall that   $K_i(\theta)$ is defined as 
	\begin{align}
	K_i(\theta) &= \sum_{k=k_{i}(\theta)}^T  \frac{2|A| k}{(   k-\left\lceil \frac{12\log (k)}{\alpha^{2}_{1}(\theta)} \right\rceil  )^{5}} \nonumber\\
	&\leq \sum_{k=k_{i}(\theta)}^\infty  \frac{2|A| k}{(   k-\left\lceil \frac{12\log (k)}{\alpha^{2}_{1}(\theta)} \right\rceil  )^{5}} \nonumber \\
	&= \sum_{k=k_{i}(\theta)}^{E_i(\theta) }  \frac{2|A| k}{(   k-\left\lceil \frac{12\log (k)}{\alpha^{2}_{1}(\theta)} \right\rceil  )^{5}} +  \sum_{k=E_i(\theta) +1}^{\infty}  \frac{2|A| k}{(   k-\left\lceil \frac{12\log (k)}{\alpha^{2}_{1}(\theta)} \right\rceil  )^{5}}. \label{eq:split}
	\end{align} 
	
	We analyze the first summation in \eqref{eq:split}. Thus, we get, \begin{align}
	\sum_{k=k_{i}(\theta)}^{E_i(\theta) }  \frac{2|A| k}{(   k-\left\lceil \frac{12\log (k)}{\alpha^{2}_{1}(\theta)} \right\rceil  )^{5}} \leq \sum_{k=k_{i}(\theta)}^{E_i(\theta) }  2|A| k &\leq \sum_{k=1}^{E_i(\theta) }  2|A| k = E_i(\theta) (E_i(\theta)+1)   |A|. \label{eq:p2}
	\end{align}
	
	Since $\log(x) \leq (x-1)/\sqrt{x}$ for all $1 \leq x < \infty$, we have \[  k-\left\lceil \frac{12\log (k)}{\alpha^{2}_{1}(\theta)} \right\rceil  \geq  k- \frac{12\log (k)}{\alpha^{2}_{1}(\theta)}   - 1 \geq \frac{(k-1)(\alpha^{2}_{1}(\theta)\sqrt{k}-12)}{\alpha^{2}_{1}(\theta) \sqrt{k}}. \] Using this, the second summation in \eqref{eq:split} can be bounded as \begin{align} 
	\sum_{k=E_i(\theta) +1}^{\infty}  \frac{2|A| k}{(   k-\left\lceil \frac{12\log (k)}{\alpha^{2}_{1}(\theta)} \right\rceil  )^{5}}  &\leq  \sum_{k=E_i(\theta) +1}^{\infty}  \frac{2|A| k^{7/2} \alpha^{10}_{1}(\theta) }{( (k-1)(\alpha^{2}_{1}(\theta)\sqrt{k}-12) )^{5}} \nonumber \\
	&\stackrel{(a)}{\leq}  \sum_{k=E_i(\theta) +1}^{\infty}  \frac{2|A| k^{7/2} \alpha^{10}_{1}(\theta) }{( k-1)^{5}} \nonumber \\
	&\leq 2|A|  \alpha^{10}_{1}(\theta)  \sum_{k=4}^{\infty}  \frac{k^{7/2}  }{( k-1)^{5}} \stackrel{(b)}{\leq} 4|A| \alpha^{10}_{1}(\theta)   \label{eq:p3}
	\end{align} where (a) follows from the observation that $(\alpha^{2}_{1}(\theta)\sqrt{k}-12)>1$ for $k\geq E_i(\theta) +1$ and (b) follows from calculus (an integral bound).
	
	Thus using \eqref{eq:p2} and \eqref{eq:p3} in \eqref{eq:split}, we get $K_i(\theta) \leq E_i(\theta) (E_i(\theta)+1)   |A| + 4|A| \alpha^{10}_{1}(\theta).$ This concludes the proof of this lemma.
\end{proof}

\end{document}